\documentclass[sigconf]{acmart}

\pdfoutput=1

\usepackage{CJKutf8}
\usepackage{array}
\usepackage[utf8]{inputenc} 
\usepackage[T1]{fontenc}    
\usepackage[table]{xcolor}  
\usepackage{hyperref}       
\usepackage{url}            
\usepackage{booktabs}       
\usepackage{tabularx}
\usepackage{amsfonts}       
\usepackage{nicefrac}       
\usepackage{microtype}      
\usepackage{graphicx}
\usepackage{amsmath}
\usepackage{wrapfig}
\usepackage{multirow}
\usepackage{comment}
\usepackage{caption}

\usepackage{subcaption}
\usepackage{pifont}

\newcommand{\ourmethod}{\textit{PAST}}

\definecolor{myyellow}{RGB}{190,144,0}
\definecolor{mygreen}{RGB}{0,136,51}
\definecolor{myblue}{RGB}{0,102,204}

\begin{document}

\begin{CJK*}{UTF8}{gbsn}


\title{\ourmethod: Prompt-Adaptive Sampling Termination for Efficient Diffusion Model}


\author{Renye Yan}
\authornote{Renye Yan and Jikang Cheng contributed equally to this work.}
\affiliation{%
  \institution{Peking University}
  \city{Beijing}
  \country{China}
}

\author{Jikang Cheng}
\authornotemark[1]
\affiliation{%
  \institution{Peking University}
  \city{Beijing}
  \country{China}
}

\author{You Wu}
\affiliation{%
  \institution{Nanjing University}
  \city{Nanjing}
  \country{China}
}

\author{Wei Peng}
\affiliation{%
  \institution{Stanford University}
  \city{Stanford}
  \state{California}
  \country{USA}
}

\author{Zongwei Wang}
\affiliation{%
  \institution{Peking University}
  \city{Beijing}
  \country{China}
}


\author{Ling Liang}
\authornote{Ling Liang and Yimao Cai are the corresponding authors.}
\affiliation{%
  \institution{Peking University}
  \city{Beijing}
  \country{China}
}

\author{Yimao Cai}
\authornotemark[2]
\affiliation{%
  \institution{Peking University}
  \city{Beijing}
  \country{China}
}


\renewcommand{\shortauthors}{Yan et al.}

\begin{abstract}

While diffusion models have made significant progress in text-to-image tasks, they still exhibit limitations when directly optimizing downstream objectives. Although Reinforcement Learning (RL) enables targeted optimization, existing methods are generally constrained by low-efficiency fine-tuning and sparse rewards. To address these challenges, we propose \ourmethod{}, which provides differentiated rewards while adaptively regulating training episode length by jointly perceiving denoising progress and prompt difficulty. Specifically, we design an intrinsic reward paradigm to compensate for sparse extrinsic rewards and guide the model to explore paths that diverge more efficiently from noise patterns. We further provide theoretical justification for intrinsic rewards. Then, \ourmethod{} dynamically monitors denoising completion and semantic alignment between image structures and prompt semantics. When both metrics satisfy generation requirements, the system adaptively terminates training. This enables appropriate allocation of episode lengths based on prompt difficulty and the current generation process. Finally, based on the predicted residual noise level, we establish a dual adaptive coordination mechanism. Specifically, it not only balances the extrinsic and intrinsic rewards but also balances the exploration and convergence. Experimental results demonstrate that \ourmethod{} enhances computational efficiency of existing RL fine-tuning methods by up to 66.7\%, while improving preference optimization quality by up to 29.5\% through its dual adaptive regulation mechanism.

\end{abstract}



\keywords{Text to Image Generation, Computational Efficiency Optimization}


\begin{CCSXML}

<ccs2012>

   <concept>

       <concept_id>10010147.10010178.10010224.10010245</concept_id>

       <concept_desc>Computing methodologies~Computer vision problems</concept_desc>

       <concept_significance>500</concept_significance>

   </concept>

   <concept>

       <concept_id>10002978</concept_id>

       <concept_desc>Generation Optimization</concept_desc>

       <concept_significance>500</concept_significance>

   </concept>

</ccs2012>

\end{CCSXML}

\ccsdesc[500]{Computing methodologies~Computer vision problems}

\ccsdesc[500]{Generation Optimization}


\maketitle

\section{Introduction}
\label{sec:intro}

In recent years, diffusion models~\cite{ho2020denoising,ramesh2022hierarchical,rombach2022ldm,wei2026team,li2026videococo,yan2025entropy,yan2026pixel} have demonstrated powerful capabilities in text-to-image (T2I)~\cite{ramesh2022hierarchical,betker2023improving,clark2023directly,lin2024evaluating,esser2024scaling,lee2024direct,li2024aligning,yuan2024self,ding2025rass} generation. These models generate samples that match the target data distribution by gradually removing noise. They have been widely applied in tasks such as image synthesis~\cite{saharia2022photorealistic,nichol2021glide,zhang2025trustclip}, video generation~\cite{ho2022imagen,ho2022video,zhang2026rep}, 3D shape modeling~\cite{zhou20213d,xu2023dream3d}, etc. However, diffusion models typically aim to match the training data distribution, lacking direct optimization for downstream tasks. 
Against this backdrop, reinforcement learning (RL)~\cite{sutton2018reinforcement,gan2024reflective,gan2024transductive} fine-tuning has emerged as a promising solution for adapting diffusion models to downstream objectives~\cite{black2023training,fan2024reinforcement}. This suitability stems from the sequential structure of diffusion models, where each timestep constitutes a decision about how the sample is denoised. Building on this, mainstream RL-based approaches model the diffusion process as a Markov Decision Process and leverage reward-based~\cite{zhong2025comprehensive} or preference-based optimization strategies~\cite{black2023training,fan2024reinforcement,yang2024using} to better align the model with downstream objectives.


While RL fine-tuning shows promise for enhancing diffusion model performance, it still faces two critical challenges. (1) Sparsity and delay of reward signals: Task rewards are only provided upon completion of the generation process, resulting in a lack of timely feedback throughout the generation procedure. This makes it challenging to correct denoising deviations in time, often exacerbating structural distortions in the generated output and increasing the risk of reward hacking ~\citep{franceschelli2024reinforcement,black2023training,franceschelli2024reinforcement,zhang2024confronting}. (2) High computational cost: Although existing RL fine-tuning frameworks significantly improve the model's ability to fit downstream tasks, such performance gains inevitably come with substantial computational overhead. This leads to a notable increase in overall training costs and extended training durations, which limit their scalability and practical efficiency in real-world applications ~\citep{zhang2025generalization,fan2023dpok,yang2024using,kim2025test,xie2025dymo,zhang2025generalization}.

\begin{figure*}[htbp]
    \centering
    \includegraphics[width=0.99\linewidth]{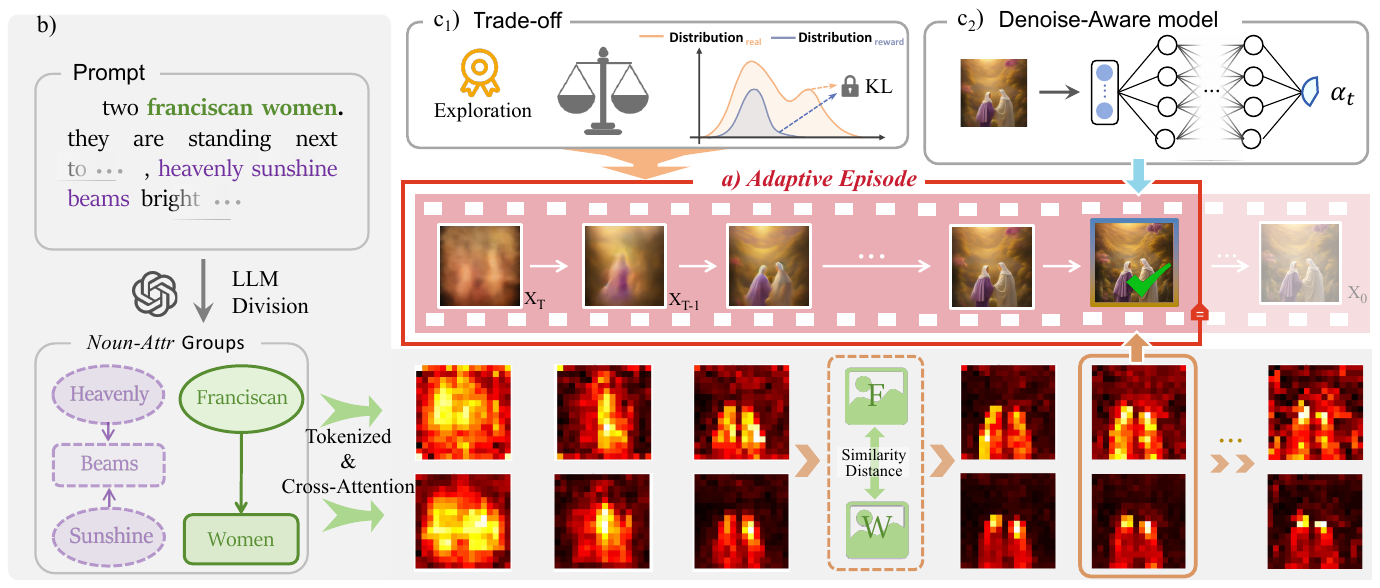}
    \caption{\textbf{Overview of \ourmethod{}}. Tackling the challenges of high computational cost and sparse rewards of existing RL fine-tuning methods, we introduce a) adaptive episode control based on prompt complexity and denoising progress, enabling a broader search range for high reward policy preserving real distribution. b) We design an intrinsic motivation based on fine-grained text-image alignment to alleviate reward sparsity. c) Both the intrinsic-extrinsic reward trade-off and the exploration-convergence trade-off are dynamically coordinated by a denoise evaluator.}
    \label{fig:placeholder}
\end{figure*}


To address reward hacking~\cite{yan2026less}, recent studies have introduced human preference modeling~\cite{uehara2024understanding,rafailov2023direct,yang2024using,wallace2024diffusion} into T2I diffusion tasks, drawing inspiration from Reinforcement Learning from Human Feedback (RLHF) in large language models (LLMs)~\cite{li2024leveraging,wang2025rare,li2025preference,ma2025inner}. The key point is to train a human preference score model that ranks generated image pairs, replacing static reward functions based on prior knowledge~\cite{kirstain2023pick,hpsv2}. This approach bypasses the reliance on fixed rewards and helps mitigate reward hacking during training. However, these methods still only evaluate the final images, and the computation cost issue remains, failing to alleviate the fundamental issue of sparse optimization signals.


To reduce the high computational cost of RL fine-tuning for diffusion models, recent methods attempt inference-time guidance without additional training~\cite{yu2023freedom,jin2023training,xie2025dymo}. These approaches use prior knowledge to guide generation without modifying model parameters. However, due to the lack of task-specific adaptation, their fitting capacity is limited, often trading off generation quality for efficiency. This creates a dilemma between Training-free: limited alignment gains and With Training: noticeable alignment gains but higher cost~\cite{kim2025test}. Thus, how to substantially reduce the cost of T2I generation while ensuring that output quality does not degrade has become a critical problem that urgently needs to be addressed. 
\textit{A detailed review of \textbf{Related Works} is provided in Appendix~\ref{appendix:related_work}}.


Motivated by the need to jointly address sparse rewards, reward hacking, and excessive computational cost, we propose Prompt-Adaptive Sampling Termination (\ourmethod{}). A lightweight RL fine-tuning plugin that integrates seamlessly into existing diffusion frameworks. \ourmethod{} features two core components: first, an intrinsic reward based on denoising progress to mitigate sparse rewards and reward hacking. Furthermore, we offer a theoretical justification for the inclusion of the intrinsic reward; second, a prompt-adaptive episode termination strategy that dynamically adjusts RL training steps to reduce computation cost. Specifically, the intrinsic reward guides the policy toward faster denoising while balancing exploration and convergence. Building on this, \ourmethod{} employs an adaptive episode strategy: a lightweight denoising predictor estimates the image’s cleanliness, while cross-attention responses measure prompt-image alignment. Once both indicators stabilize, the generation process is considered complete, and sampling is terminated early with the current state returned as the output.


Unlike prior RL methods that use fixed-length episodes for all prompts, \ourmethod{} adaptively adjusts episode length based on prompt complexity and generation progress. Thus, \ourmethod{} can 
skip the redundant parts whose marginal returns have already saturated during the policy training phase, thereby bringing three benefits: first, it reduces the impact of low-quality samples, thereby improving training stability; second, it mitigates reward hacking from over-optimization; and third, it significantly cuts computational cost.


We conduct comprehensive evaluations of \ourmethod{} across standard downstream metrics, advanced prompt datasets, ablation studies, and cross-task generalization tests. \ourmethod{} consistently outperforms mainstream baselines. It achieves comparable or improved performance while significantly reducing computational cost. These results demonstrate both its optimization effectiveness and broad adaptability. Overall, the contributions of this paper are as follows:


\begin{itemize}

    \item We break the fixed-episode paradigm in existing RL fine-tuning by introducing the adaptive episode control based on prompt complexity and denoising progress, reducing training computational cost.

    \item We introduce an intrinsic motivation mechanism for the diffusion model to alleviate reward sparsity. In addition, we justify the introduction of the intrinsic reward.
    
    \item By trade-off intrinsic-extrinsic rewards and exploration-\\convergence, we achieve a more effective reward search under a real distribution.

\end{itemize}

\begin{figure*}[htbp]
    \centering
    \includegraphics[width=1\linewidth]{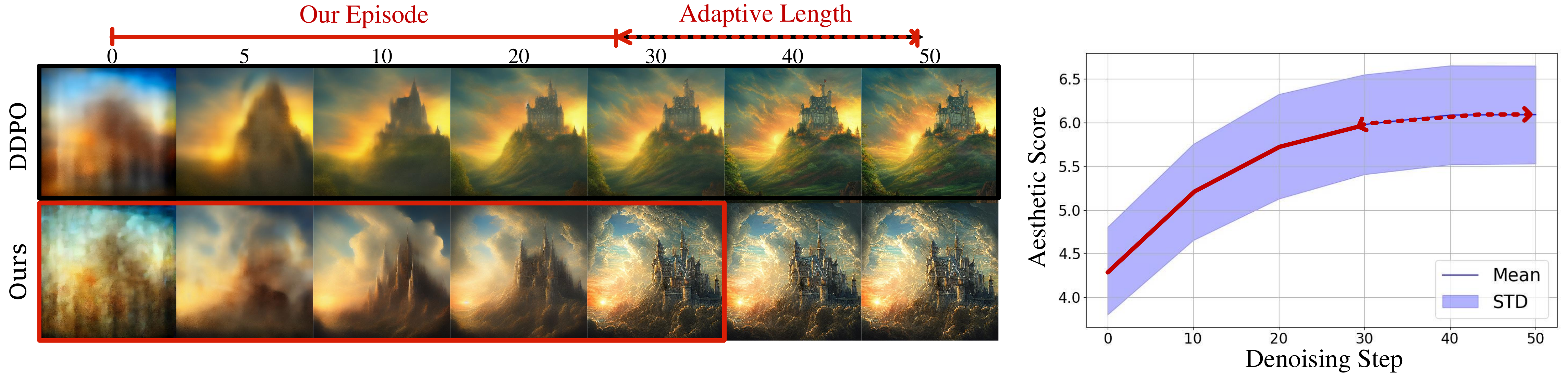}
    \caption{Image quality during denoising. The quality improvement becomes marginal in the last segment of denoising steps~\cite{ma2024deepcache}. Compared to traditional methods that go through the entire denoising process, our method adapts episode lengths to focus the training on the steps where image quality is significantly improving to enhance efficiency.}
    \label{fig:intro}
\end{figure*}

\section{Method}
\label{sec:method}

In this seciton we propose \ourmethod{} that improves sampling efficiency and stabilizes policy learning through intrinsic motivation and adaptive epsoide termination. Without altering diffusion framework or training paradigms, \ourmethod{} can be integrated into existing RL methods.
In Sec.~\ref{intrinsic}, we design intrinsic rewards to complement sparse extrinsic rewards and accelerate noise removal. We address the extrinsic-intrinsic rewards and exploration-convergence trade-off dilemma in Sec.~\ref{sec:trade-off}. Sec.~\ref{sec:ada} presents an adaptive episode termination mechanism that dynamically discards low-value steps in training to enhance sample efficiency.

\subsection{RL for Diffusion Model}\label{sec:21}

\subsubsection{MDP in Diffusion Model.} 
We formulate the denoising process as a Markov Decision Process (MDP), where $p_\theta(\mathbf{x}_{0:T}\mid\mathbf{c})$ is defined as the diffusion model for text-to-image generation. The prompt $\mathbf{c}$ follows the distribution $p(\mathbf{c})$. The MDP process involve
\begin{equation}
\label{denotes}
    \begin{aligned}
        \mathbf{s}_t &\triangleq \left(\mathbf{c}, \mathbf{x}_{T-t}\right), \\ 
        \mathbf{a}_t &\triangleq \mathbf{x}_{T-t-1}, \\ 
        \pi_\theta\left(\mathbf{a}_t \mid \mathbf{s}_t\right) &\triangleq p_\theta\left(\mathbf{x}_{T-t-1} \mid \mathbf{x}_{T-t}, \mathbf{c}\right), \\
        P_0\left(\mathbf{s}_0\right) &\triangleq \left(p(\mathbf{c}), \mathcal{N}(\mathbf{0}, \mathbf{I})\right),\\
        P\left(\mathbf{s}_{t+1} \mid \mathbf{s}_t, \mathbf{a}_t\right) &\triangleq \delta_{(\mathbf{c}, \mathbf{x_{T-t-1}})}. \\ 
    \end{aligned}
\end{equation}


Here,\( \mathbf{s}_t \) denotes all possible image states that can arise during the fine-tuning process, with each \( \mathbf{s}_t \) corresponding to an intermediate variable \( \mathbf{x}_{T-t} \) in the denoising trajectory. The action \( \mathbf{a}_t \) refers to the denoising output produced by the model at timestep \( t \), namely \( \mathbf{a}_t = \mathbf{x}_{T-t-1} \). This action is determined jointly by the noise prediction and the sampling noise. The remaining notation is defined as follows: $\pi_\theta$ represents the parameterized policy; $P_0$ denotes the initial state distribution; $\delta_{(\mathbf{c}, \mathbf{x_{T-t-1}})}$ denotes the Dirac distribution centered at $(\mathbf{c}, \mathbf{x_{T-t-1}})$; and \( P(\mathbf{s}_{t+1} \mid \mathbf{s}_t, \mathbf{a}_t) \) represents the state transition distribution, which specifies the conditional probability of the next state \( \mathbf{s}_{t+1} \) given the current state \( \mathbf{s}_t \) and action \( \mathbf{a}_t \).

Existing RL studies define the reward or feedback as:
\begin{equation}
\label{fake_reward}
        R\left(\mathbf{s}_t, \mathbf{a}_t\right) \triangleq 
        \begin{cases}
            r(\mathbf{s}_{t+1}) = r\left(\mathbf{x}_0, \mathbf{c}\right) & \text{if } t=T-1, \\
            0 & \text{otherwise}.
        \end{cases}.
\end{equation}
As can be seen, \( R(s_t, a_t) \) has a non-zero value only at the final MDP step, corresponding to the generated image. This sparse reward setting results in a lack of intermediate feedback signals during learning, making it difficult for the model to make dynamic adjustments and potentially leading to accumulated denoising errors~\cite{black2023training,franceschelli2024reinforcement}.

\subsubsection{Intrinsic Motivation}
\label{intrinsic}
With intrinsic motivation, the MDP introduces new symbols:
$R_{\text{ext}}\left(\mathbf{s}_t, \mathbf{a}_t\right)$ denotes the extrinsic rewards associated with specific downstream tasks, 
which is exactly equivalent to $R\left(\mathbf{s}_t, \mathbf{a}_t\right)$ in Eq.~\ref{fake_reward};
$R_{\text{int}}\left(\mathbf{s}_t, \mathbf{a}_t\right)$ denotes the intrinsic rewards;
By combining the intrinsic and extrinsic rewards with a fixed ratio $\lambda$, we obtain the total rewards 
$R_{\text{total}}\left(\mathbf{s}_t, \mathbf{a}_t\right)$, which is defined as:
\begin{equation*} 
\begin{aligned} 
R_{\text{total}}\left(\mathbf{s}_t, \mathbf{a}_t\right)  &= R_{\text{ext}}\left(\mathbf{s}_t, \mathbf{a}_t\right)  + \lambda R_{\text{int}}\left(\mathbf{s}_t, \mathbf{a}_t\right) .
\end{aligned} 
\end{equation*}
Where $R_{\text{int}}$ stands for a complement to the extrinsic reward, which providing fine-grained feedback at intermediate steps during the generation. This combination of intrinsic and extrinsic rewards effectively overcomes the limitations of sparse rewards.

The intrinsic reward mechanism in \ourmethod{} is designed to encourage exploration of faster denoising trajectories. Thus, the intrinsic reward is defined as the difference between each intermediate image $\mathbf{x}_t$ and the initial noise image $\mathbf{x}_T$ in the diffusion process, formally:
\begin{equation}
\label{chuint}
 R_{\text{int}}\left(\mathbf{s}_t, \mathbf{a}_t\right) = || \mathbf{x}_{t} - \mathbf{x}_T ||_2^2.
\end{equation}
In this paper, $R_{\text{int}}(\mathbf{s}_t, \mathbf{a}_t)$ is abbreviated as $R_{{int}}$. To guarantee reliable exploration, we impose a dynamic KL constraint to keep exploration consistent with the true distribution (see Sec.~\ref{sec:trade-off}).

\noindent \textbf{Role of Intrinsic Reward:}
\begin{itemize}
    \item Promoting fast denoising: Supporting experimental results can be found in Fig.~\ref{fig:std} \& Fig.~\ref{fig:tra_rebuttal} (in the appendix).
    \item Encouraging diverse generation during denoising: Supporting experimental results can be found in Tab.~\ref{tab:tv}, and Fig.~\ref{fig:epochD-re} \&~\ref{fig:div_all_past} \& ~\ref{fig:hacking} (in the appendix).
\end{itemize}

\noindent \textbf{Intrinsic Reward Rationality:}
\begin{itemize}
    \item Theoretical guarantees: the intrinsic reward is theoretically justified in Theorem.~\ref{thm1}.
    \item Adaptive KL: As described in Sec.~\ref{sec:trade-off}, we adaptively impose distributional constraints according to the training progress, ensuring that exploration driven by intrinsic rewards remains within the vicinity of the true data distribution. This adaptive mechanism enhances the robustness of the exploration process. Benefiting from the synergy between intrinsic rewards and adaptive constraints, the model is able to discover more valuable generative trajectories, thereby improving task reward acquisition while preventing degradation in fidelity (FID) during exploration (see Tab.~\ref{tab:tv}).
\end{itemize}


\subsubsection{Adaptive factor}
\label{adafactor}
We construct a Denoise-Aware Model (DAM) that uses the initial noise image \( \mathbf{x}_{T} \) and the final generated result \( \mathbf{x}_{0} \) as training data to assess in real time the degree of denoising cleanliness of the current image. During training, \( \mathbf{x}_T \) is labeled as 0 and the fully denoised image \( \mathbf{x}_0 \) is labeled as 1. To clearly distinguish the regions close to the initial noise and close to the final result, we set the neighborhood ranges around these two types of samples as a tunable hyperparameter \( \nu \), whose value is provided in the appendix. Based on this design, DAM can determine in real time how clean the current generated state \( \mathbf{x}_t \) is. DAM is trained using a binary cross-entropy loss:
\begin{equation}
    \mathcal{L}_{\text{DAM}} = -\sum_{t} \left( y_t \log(f(\mathbf{x}_t)) + (1 - y_t) \log(1 - f(\mathbf{x}_t)) \right),
\end{equation}
where \( y_t \in \{0,1\} \) indicates whether the sample lies in the clean-side neighborhood (\( y_t = 1 \)) or the noise-side neighborhood (\( y_t = 0 \)) of the denoising trajectory, and \( f(\mathbf{x}_t) \) denotes the predicted probability that the sample belongs to the clean-side region. After training, we define \( \alpha_t = f(\mathbf{x}_t) \) as a real-time measure of the denoising progress. Thus, $\alpha_t \in [0,1]$ and represents the cleanliness level of the intermediate state $\mathbf{x}_t$.

We define the adaptive factor as \( (1 - \alpha_{t}) \) and \( \alpha_{t} \). Specifically, in the early stage of the denoising process, the image contains a large amount of noise, so \( \alpha_{t} \) tends to 0. As the noise level decreases, \( \alpha_{t} \) gradually increases. We use the adaptive factor to perceive the denoising progress in real time and achieve a dual adaptive trade-off(see Sec.~\ref{reward_trade} and Sec.~\ref{Exploration-Convergence} for details).

\subsection{Dynamic Trade-off in Diffusion Fine-tuning}
\label{sec:trade-off}
Although RL enhances the generative capability of diffusion models, it also introduces exploration–exploitation challenges~\cite{franceschelli2024reinforcement,hao2024reinforcement,uehara2024understanding}. This work implements dual adaptive balancing between intrinsic (exploration) and extrinsic (exploitation) rewards, as well as between exploration and KL regularization, enabling the optimized policy to achieve higher downstream rewards while remaining aligned with the true data distribution.

\subsubsection{Regulation of Extrinsic-Intrinsic Rewards}
\label{reward_trade}

At rewards level, we employ the adaptive factor \( (1 - \alpha_{t}) \) proposed in Sec.~\ref{adafactor} to balance the intrinsic and extrinsic rewards over time. In the early generation stages, when extrinsic rewards do not obviously reflect downstream task quality, the optimization is primarily guided by intrinsic rewards to mitigate sparsity and encourage exploration. As the generation progresses and the states approaches the final image, the weight of intrinsic rewards is gradually reduced, allowing extrinsic rewards to dominate and guide the policy to converge toward high-reward sample distributions stably. Therefore, the overall reward function with the adaptive mechanism is:
\begin{equation*} 
\begin{aligned} 
R_{\text{total}}\left(\mathbf{s}_t, \mathbf{a}_t\right)  &= R_{\text{ext}}\left(\mathbf{s}_t, \mathbf{a}_t\right)  + (1 - \alpha_{t}) \cdot R_{\text{int}}\left(\mathbf{s}_t, \mathbf{a}_t\right)\\
&=r\left(\mathbf{x}_0, \mathbf{c}\right)  + (1 - \alpha_{t}) \cdot || \mathbf{x}_{t} - \mathbf{x}_T ||_2^2 .
\end{aligned} 
\end{equation*}
The adaptive reward mechanism causes the model to use intrinsic rewards during the early denoising stage to address sparse feedback. In the late stage, this mechanism will shift to extrinsic rewards to maximize performance on downstream tasks.


\subsubsection{Exploration-Convergence Scheduling}
\label{Exploration-Convergence}


At the level of the generative distribution, we use adaptive factors \( (1 - \alpha_{t}) \) and \( \alpha_{t} \) to adjust exploring new generation trajectories and converging to the true distribution. In the early stage, we encourage exploration and faster denoising trajectories, providing flexibility for RL to determine episode lengths adaptively. As the generation progresses, the weight of the KL term gradually increases, ensuring stable convergence of the fine-tuning distribution to the actual target. Therefore, the global optimization objective is:
\begin{equation}\label{lossada1}
    \begin{aligned}
        \mathcal{J} =&
\mathbb{E}_{p(\mathbf{c})}\mathbb{E}_{p_{\theta}(\tau|\mathbf{c})}[r\left(\mathbf{x}_0, \mathbf{c}\right))
        - \sum_{t=1}^{T}\alpha_t KL(p_{\theta}(\cdot))||p_{pre}(\cdot))]
\\&
        +\mathbb{E}_{p(\mathbf{c})}\mathbb{E}_{p_{\theta}(\tau|\mathbf{c})}\sum_{t=0}^{T} (1-\alpha_t)R_{\text{int}}\left(\mathbf{s}_t, \mathbf{a}_t\right)(\mathbf{x}_t)
,
    \end{aligned}
\end{equation}
where $p_{\theta}$ and $p_{pre}$ denote the fine-tuned and original model. The prompt $\mathbf{c}$ is drawn from the distribution $p(\mathbf{c})$. The $\tau= (\mathbf{x}_{T}, \mathbf{x}_{T-1}, \ldots,$ $ \mathbf{x}_{0})$ denotes the entire denoising trajectory.

\begin{theorem}\label{thm1}
    Given the objective of Eq. (\ref{lossada1}), the optimal policy $p_{\theta^*}$ has the following expression:
    \begin{equation}\label{aa}
    \begin{aligned}
        &p_{\theta^*}(\mathbf{x}_{t-1}|\mathbf{x}_t, \mathbf{c})\\
        \propto & \exp\{\!\frac{Q_{ext}^*(\mathbf{x}_{t\!-\!1},\! \mathbf{c}\!)\!+ Q_{int}^*(\mathbf{x}_{t\!-\!1},\! \mathbf{c}\!)\!+\!\alpha_{t-1}\! \log{p_{pre}(\mathbf{x}_{t\!-\!1}\!|\!\mathbf{x}_t,\!\mathbf{c}\!)}}{\alpha_{t-1}}\},
    \end{aligned}
\end{equation}
where $ Q_{ext}^*\!(\mathbf{x}_{t\!-\!1}, \mathbf{c})\! =\! r(\mathbf{x}_0,\mathbf{c}\!)I_{\{t\!-\!1\!=\!0\}}\!+ \alpha_{t-2}\cdot\log \sum_{\mathbf{x}_{t-2}}\exp\{\frac{Q_{ext}^*(\mathbf{x}_{t-2},\mathbf{c})}{\alpha_{t-2}}\}$ and $Q_{int}^*\!(\mathbf{x}_{t\!-\!1}, \mathbf{c})\! =\! (1-\alpha_{t-1})R_{int}(\mathbf{x}_{t-1}) + \alpha_{t-2}\cdot$ 
$\log \sum_{\mathbf{x}_{t-2}}\exp\{\frac{1}{\alpha_{t-2}}($ $Q_{int}^*(\mathbf{x}_{t-2},\mathbf{c})+\alpha_{t-2}\log{p_{pre}(\mathbf{x}_{t-2}|\mathbf{x}_{t-1},\mathbf{c})})\}$.
\end{theorem}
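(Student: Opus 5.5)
We will prove Theorem~\ref{thm1} by backward induction on the timestep, in the style of soft (entropy-regularized) dynamic programming. The key observation is that the KL term in Eq.~(\ref{lossada1}) factorizes along the Markov chain. Because both $p_\theta$ and $p_{pre}$ are Markovian with the same initial distribution $\mathcal N(\mathbf 0,\mathbf I)$, the chain rule for KL gives
\begin{equation*}
KL\big(p_\theta(\tau|\mathbf c)\,\|\,p_{pre}(\tau|\mathbf c)\big)=\sum_{t}\mathbb E_{p_\theta}\Big[KL\big(p_\theta(\cdot|\mathbf x_t,\mathbf c)\,\|\,p_{pre}(\cdot|\mathbf x_t,\mathbf c)\big)\Big].
\end{equation*}
We interpret the weighted sum in Eq.~(\ref{lossada1}) so that the per-step KL of the transition producing $\mathbf x_{t-1}$ carries weight $\alpha_{t-1}$. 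With this reading, $\mathcal J$ becomes an expected sum of per-step terms of the form
\begin{equation*}
(1-\alpha_{t-1})R_{int}(\mathbf x_{t-1})-\alpha_{t-1}\log\frac{p_\theta(\mathbf x_{t-1}|\mathbf x_t,\mathbf c)}{p_{pre}(\mathbf x_{t-1}|\mathbf x_t,\mathbf c)},
\end{equation*}
plus the terminal reward $r(\mathbf x_0,\mathbf c)$. Throughout, $\alpha_t$ is treated as a fixed positive weight for each state; this is a standing assumption we must make, since $\alpha_t=f(\mathbf x_t)$ depends on the state.

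Next, we define the soft value $V_t^*(\mathbf x_t,\mathbf c)$ as the optimal expected remaining objective from $\mathbf x_t$ onward, and set $Q^*(\mathbf x_{t-1},\mathbf c)$ to be the immediate non-KL gain at $\mathbf x_{t-1}$ plus $V^*_{t-1}(\mathbf x_{t-1},\mathbf c)$. The one-step optimization at $\mathbf x_t$ is then
\begin{equation*}
\max_{q}\ \mathbb E_{q}\big[Q^*(\mathbf x_{t-1},\mathbf c)\big]-\alpha_{t-1}KL\big(q\,\|\,p_{pre}(\cdot|\mathbf x_t,\mathbf c)\big),
\end{equation*}
taken over distributions $q$ on the simplex. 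The Gibbs variational principle (Donsker--Varadhan) says this is maximized by
\begin{equation*}
q\propto p_{pre}\exp\{Q^*/\alpha_{t-1}\}=\exp\{(Q^*+\alpha_{t-1}\log p_{pre})/\alpha_{t-1}\},
\end{equation*}
with optimal value $\alpha_{t-1}\log\sum_{\mathbf x_{t-1}}\exp\{(Q^*+\alpha_{t-1}\log p_{pre})/\alpha_{t-1}\}$. A short Lagrangian argument, or nonnegativity of KL against the normalized Gibbs distribution, establishes both facts. The base case $t-1=0$ has $Q^*=r(\mathbf x_0,\mathbf c)+(1-\alpha_0)R_{int}(\mathbf x_0)$ with no continuation. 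The inductive step substitutes the optimal value at the previous level into $V^*$, which yields exactly the log-sum-exp recursions stated in the theorem.

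Finally, we split $Q^*=Q^*_{ext}+Q^*_{int}$, assigning the terminal reward to $Q^*_{ext}$ and the intrinsic terms together with the $\log p_{pre}$ inside the log-sum-exp to $Q^*_{int}$. This reproduces Eq.~(\ref{aa}) and the two displayed recursions. I expect this decomposition to be the main obstacle. Log-sum-exp is not additive, so $V^*$ does not literally split into an extrinsic and an intrinsic soft value. Some care is needed, and I would handle it in one of two ways:
\begin{itemize}
\item[(a)] Note that the extrinsic reward is nonzero only at $t=0$, so along the backward recursion one can track the combined $Q^*$ and read the theorem's $Q^*_{ext}+Q^*_{int}$ as notation for this joint soft value.
\item[(b)] State explicitly the regime in which the split is exact, for example when the extrinsic continuation is deterministic given $\mathbf x_{t-1}$ or when we define the components jointly.
\end{itemize}
I would try (a) first and state the recursions for the combined $Q^*$. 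The remaining index bookkeeping, namely which $\alpha$ multiplies which KL and the $I_{\{t-1=0\}}$ term, is routine once the per-step weighting above is fixed.
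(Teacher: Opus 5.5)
Your backward soft dynamic-programming argument (KL chain rule, then the Gibbs variational principle at each step) is correct for the \emph{combined} soft value. It carries the same content as the first step of the paper's proof, which simply cites Dai et al.\ for $p_{\theta^*}\propto\exp\{(Q^*+\alpha_{t-1}\log p_{pre})/\alpha_{t-1}\}$ with a single joint log-sum-exp recursion for $Q^*$. Your caveat about freezing $\alpha_{t-1}$ is genuinely needed: $\alpha_{t-1}=f(\mathbf{x}_{t-1})$ depends on the very variable being chosen, and without freezing it the maximizer need not have the stated Gibbs form.

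The gap is the split $Q^*=Q^*_{ext}+Q^*_{int}$, that is, your claim that the inductive step yields exactly the stated recursions. Careful bookkeeping will not fix it. Option (a) proves a different statement and leaves both displayed recursions unproved, and those recursions are in fact wrong. The $Q^*_{ext}$ recursion sums $\exp\{Q^*_{ext}(\mathbf{x}_{t-2},\mathbf{c})/\alpha_{t-2}\}$ over $\mathbf{x}_{t-2}$ with no factor $p_{pre}(\mathbf{x}_{t-2}|\mathbf{x}_{t-1},\mathbf{c})$. Hence for $t-1\ge 1$ it does not depend on $\mathbf{x}_{t-1}$ at all (and over a continuous latent space it is typically infinite). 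It therefore cancels in the normalization, so the stated policy ignores $r$ at every step except the last.

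For a concrete counterexample, take finitely many states, $T=2$, and $\alpha_t\equiv 1$, which kills the intrinsic term. Then $Q^*_{int}\equiv 0$ because $\sum_{\mathbf{x}_0}p_{pre}(\mathbf{x}_0|\mathbf{x}_1,\mathbf{c})=1$. The theorem yields $p_{\theta^*}(\mathbf{x}_1|\mathbf{x}_2,\mathbf{c})=p_{pre}(\mathbf{x}_1|\mathbf{x}_2,\mathbf{c})$. Your own recursion, correctly, yields $p_{\theta^*}(\mathbf{x}_1|\mathbf{x}_2,\mathbf{c})\propto p_{pre}(\mathbf{x}_1|\mathbf{x}_2,\mathbf{c})\sum_{\mathbf{x}_0}p_{pre}(\mathbf{x}_0|\mathbf{x}_1,\mathbf{c})e^{r(\mathbf{x}_0,\mathbf{c})}$. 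These differ whenever the inner sum varies with $\mathbf{x}_1$.

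Reinserting $p_{pre}$ into the $Q^*_{ext}$ recursion would not rescue the split either, since $\log\mathbb{E}[e^{A+B}]\neq\log\mathbb{E}[e^{A}]+\log\mathbb{E}[e^{B}]$ in general. For the recursions as written, the formula is right exactly when $r$ does not change the optimal policy before the final step, so (b) has no useful regime.

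The paper's proof does not close this gap either. Its chain of equalities replaces $R$ by $R_{int}$, dropping $r$ and the factor $1-\alpha$, and it switches the temperature from $\alpha_{t-2}$ to $\alpha_{t-1}$ in the last line. What your argument actually establishes, and what you should state, is the combined form $p_{\theta^*}(\mathbf{x}_{t-1}|\mathbf{x}_t,\mathbf{c})\propto\exp\{(Q^*(\mathbf{x}_{t-1},\mathbf{c})+\alpha_{t-1}\log p_{pre}(\mathbf{x}_{t-1}|\mathbf{x}_t,\mathbf{c}))/\alpha_{t-1}\}$. Here $Q^*$ obeys the single joint recursion driven by the reward $r(\mathbf{x}_0,\mathbf{c})I_{\{t-1=0\}}+(1-\alpha_{t-1})R_{int}(\mathbf{x}_{t-1})$.
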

\noindent \textbf{Theorem’s conclusion.} The theorem guarantees the convergence of Eq.~\ref{lossada1}, facilitating a stable training process in the experiments. \underline{The proof is as follows:}


\begin{proof}
According to~\cite{DaiS0XHLCS18}, we have 
\begin{align*}
p_{\theta^*}(x_{t-1}|x_t,z) \varpropto \exp\{\frac{Q^*(x_{t-1}, z)+\alpha_{t-1}\log{p_{pre}(x_{t-1}|x_t,z)}}{\alpha_{t-1}}\},
\end{align*}
where
$ Q^*(x_{t-1}, z) = R(x_{t-1})+ 
\alpha_{t-2}
\log \sum_{x_{t-2}}\exp\{\frac{1}{\alpha_{t-2}}(Q^*(x_{t-2},z)+\alpha_{t-2} \log{p_{pre}(x_{t-2}|x_{t-1},z)})\} 
$.

We know that $ R(x_t) = R_{ext}(x_t) +(1-\alpha_{t}) R_{int}(x_t)$.
According to $Q^*(x_{t-1},z)$, 
defined $$ Q_{ext}^*\!(x_{t\!-\!1}, z)\! =\! r(x_0,\!z)I_{\{t\!-\!1\!=\!0\}}\!+ \alpha_{t-2}\cdot\log \sum_{x_{t-2}}\exp\{\frac{Q_{ext}^*(x_{t-2},z)}{\alpha_{t-2}}\}$$
and $ Q_{int}^*\!(x_{t\!-\!1}, z)\! =\! (1-\alpha_{t-1})R_{int}(x_{t-1})+ \alpha_{t-2}\cdot$ 
$\log \sum_{x_{t-2}}\exp\{\frac{1}{\alpha_{t-2}}($ $Q_{int}^*(x_{t-2},z)+\alpha_{t-2}\log{p_{pre}(x_{t-2}|x_{t-1},z)})\}$,
we have
\begin{align*}
		&\exp\{\frac{Q^*(x_{t-1},z)}{\alpha_{t-2}}\}\\
        =&\exp\{\frac{R(x_{t-1})}{\alpha_{t-2}}\}
		\sum_{x_{t-2}}\exp\{\frac{Q^*(x_{t-2},z)+\alpha_{t-2} \log{p_{pre}(x_{t-2}|x_{t-1},z)}}{\alpha_{t-2}}\} \\
		=& \exp\{\frac{R_{int}(x_{t-1})}{\alpha_{t-2}}\}
		\sum_{x_{t-2}}\exp\{\frac{R_{int}(x_{t-2})}{\alpha_{t-3}}\}
		\exp\{\frac{\log{p_{pre}(x_{t-2}|x_{t-1},z)}}{\alpha_{t-2}}\}\\
		&\sum_{x_{t-3}}\exp\{\frac{Q^*(x_{t-3},z)+\alpha_{t-3} \log{p_{pre}(x_{t-3}|x_{t-2},z)}}{\alpha_{t-3}}\} \\
		= & \exp\{\frac{Q_{ext}^*(x_{t-1}, z) +Q_{int}^*(x_{t-1}, z)}{\alpha_{t-1}}\}
	\end{align*}





Therefore, we have
\begin{equation*}
    Q^*(x_{t-1},z) = Q^*_{ext}(x_{t-1},z) +Q^*_{int}(x_{t-1},z) 
\end{equation*}

\end{proof}

\begin{figure*}[htbp]
    \centering
    \includegraphics[width=1\linewidth]{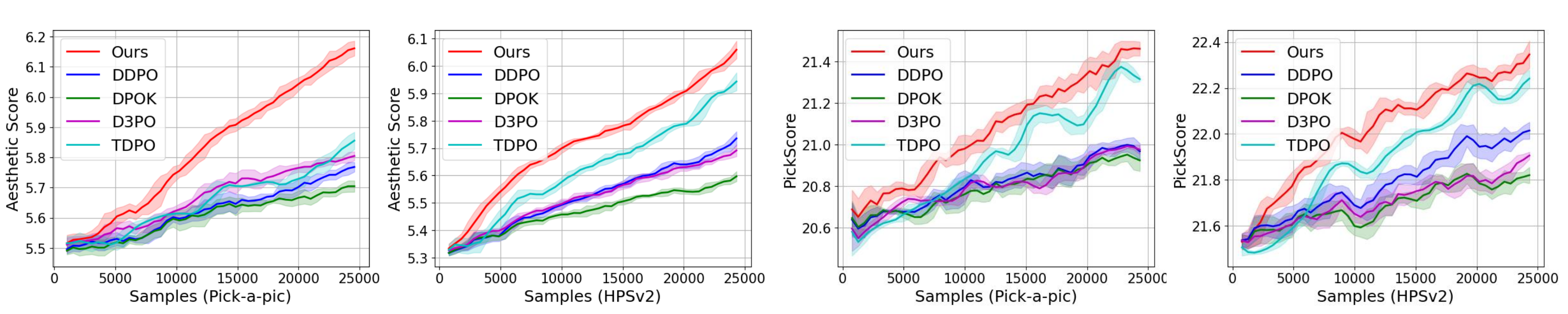}
    \caption{Reward optimization results with two datasets (\textit{i.e.}, HPSv2 and Pick-a-pic) and two reward functions (\textit{i.e.}, Aesthetic Score and PickScore). Our method performs the most efficiently in optimizing the rewards.}
    \label{fig:main}
    \vspace{-0.4cm}
\end{figure*}
\subsection{Adaptive Episode Length Control}
\label{sec:ada}
\subsubsection{Sementic alignment evaluator}
Evaluation functions like CLIP are not only time-consuming but also struggle to provide accurate real-time assessments of semantic alignment during denoising~\cite{dhariwal2021beat}. In conditional diffusion models, prompt embeddings are incorporated through cross-attention. Therefore, it is reasonable to assume that each token's attention map reflects its semantic content.
Based on this, we propose a metric to quantify prompt-image alignment during denoising by evaluating the similarity of attention maps within a group of associated samples. An associated sample group is defined as a \textit{Noun} from the prompt along with all its associated attributive (\textit{Attr}). Since the tokens in a group jointly describe the same visual region in the generated image, their corresponding attention maps should focus on similar regions. The similarity among these maps can thus serve as a metric for semantic alignment. Thanks to the rapid development of large language models (LLMs), we can rely on them to efficiently decompose prompts, thereby obtaining the \textit{Noun}-\textit{Attr} sample groups of each training prompt.

Formally, given a prompt $\mathbf{c}=\{\mathbf{G}_0,...,\mathbf{G}_m\}$ contains $m$ \textit{Noun}-\textit{Attr} group, we select one group 
$\mathbf{G}_i=\{\text{tok}_i^0,...\text{tok}_i^n\}$ and compute the pairwise cosine distance between their attention maps. The average of these distances is used as an indicator of alignment quality. 
\begin{equation}\label{lossada}
    \begin{aligned}
        &\text{Dist}(\mathbf{G}_i)=\\
        & \frac{1}{N(N-1)} \sum_{\substack{j=1 \\ j \ne k}}^{N} \sum_{k=1}^{N} \left(1 - \frac{\langle A_t(\text{tok}_i^j), A_t(\text{tok}_i^k) \rangle}{\|A_t(\text{tok}_i^j)\| \cdot \|A_t(\text{tok}_i^k)\|} \right),
    \end{aligned}
\end{equation}
where $A_t$ represents the token's attention map of $t^{th}$ denosing step, $N$ represents the total length of token. The alignment metric of $\mathbf{c}$ at the current step is: 
\begin{equation}
\alpha^s_t =  \frac{1}{m} \sum_{i=1}^{m} \text{Dist}(G_i).
\end{equation}
The alignment metric can then represent the semantic signal for episode length control. 
\\
\subsubsection{Denoise evalulator}
Note that the output of the Denoise-Aware Model  $\alpha_t$  could be used to evaluate the denoising status of the intermediate step. To align with $\alpha^s_t$, we re-denote $\alpha_t$ as $\alpha^d_t$ in this subsection, representing the denoising completion indicator.

\begin{figure}[htbp]
    \centering
    \includegraphics[width=1\linewidth]{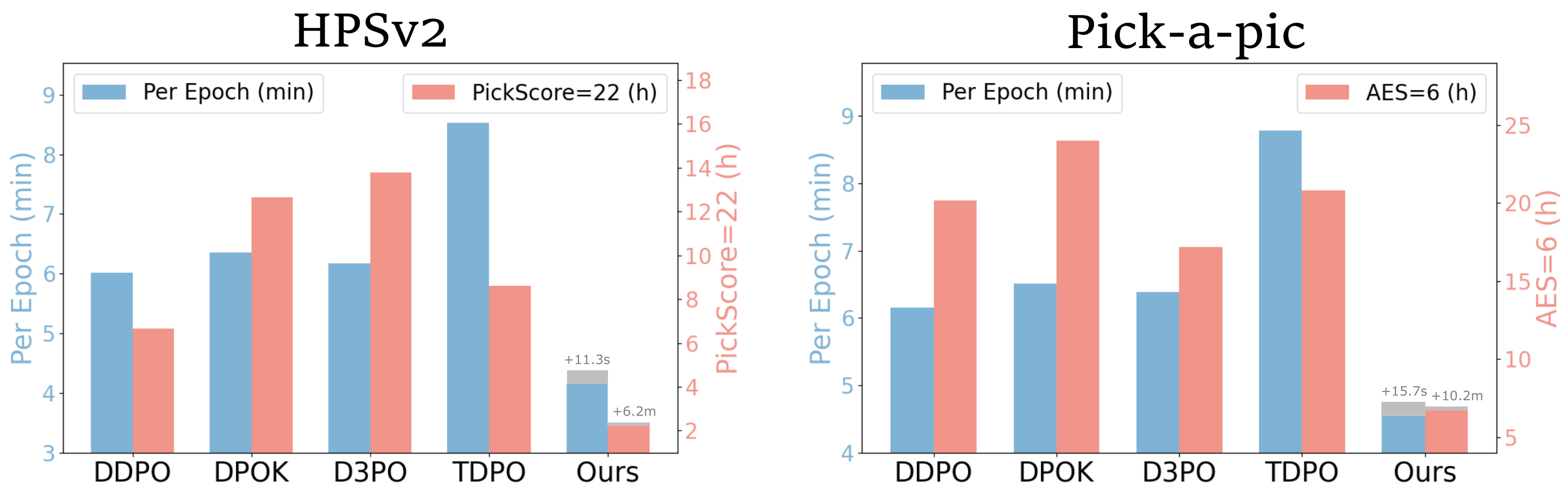}
    \caption{Computational cost to reach specific reward scores (PS=22 or Aes=6). The gray region is the extra prompt-decomposing time with Real-time LLM. }
    \label{fig:time}
\end{figure}
\subsubsection{Two-factor episode length control}
After the end of an episode, the noise and semantics should be jointly considered in the generated image. We decide the episode length based on the $\alpha^d_t$ and $\alpha^s_t$. Firstly, $\alpha^d_t=1$ can be viewed as the denoising finish signal, and the corresponding denoising step $t^d_{\text{len}}$ is:
\begin{equation}
t^d_{\text{len}} = \min \left\{ t \mid \alpha^d_t \to 1 \right\}.
\end{equation}
For semantic alignment, we also include $\alpha^s_t$ and its variation. When both of them tend to 0, the semantic alignment is achieved. We use $t^s_{\text{len}}$ to represent the alignment step:
\begin{equation}
    t^s_{\text{len}} = \min \left\{ t \mid \alpha^s_t \to 0 \;\; \text{and} \;\; \left| \alpha^s_t - \alpha^s_{t+1} \right| \to 0 \right\}.
\end{equation}
Since the final episode length needs to satisfy both noise and semantic requirements, the episode step index is: 
\begin{equation}
    t_{\text{len}} = \min \left( t^s_{\text{len}}, \; t^d_{\text{len}} \right),
\end{equation}
and the episode length is $T-t_{len}$.

\section{Experimental Evaluation}

\subsection{Implementation Details}
\noindent \textbf{Datasets.} 
For fine-tuning, we introduce three datasets with rich and complex prompts, that is, HPSv2, Pick-a-Pic, and Simple animal. 
These prompts are then pre-processed before training by Chat-GPT-4o to obtain the \textit{Noun}-\textit{Attr} groups, respectively. For inference, the prompts are divided in real time.\\
\textbf{Rewards and Metrics.} In this paper, we introduce Aesthetic Score (AES)~\cite{aesthetic}, PickScore (PS)~\cite{kirstain2023pick}, JPEG compressibility, and incompressibility as the training reward functions. 
For the evaluation metrics, we deploy AES, PS, and ImageReward (IR)~\cite{xu2023imagereward} to assess the human preference and aesthetic impression, Clip Score (CLIP)~\cite{clip} for prompt-image alignment, and Inception Score (IS)~\cite{inception}, TCE~\cite{TCE}, and LPIPS~\cite{LPIPS} to assess image diversity.\\
\textbf{Baselines.} The primary criterion for evaluating a reinforcement learning (RL) fine-tuning algorithm is its ability to efficiently improve the target reward score. To this end, we compare our method against four state-of-the-art (SOTA) baselines (\textit{i.e.}, DDPO~\cite{black2023training}, DPOK\\~\cite{fan2024reinforcement}, D3PO~\cite{yang2024using}, and TDPO~\cite{zhang2024confronting}). Then, we deploy Stable Diffusion v1.5 (SDv15) as the main test bed, with Stable Diffusion v1.4 (SDv14),  v2.1-turbo (SDv21), XL1.0 (XL), and the flow-matching model SD3.5 as the supplementary backbones in experiments. All experiments were conducted on four NVIDIA Tesla H20 GPUs.

\begin{figure}[t]
    \centering
    \includegraphics[width=1\linewidth]{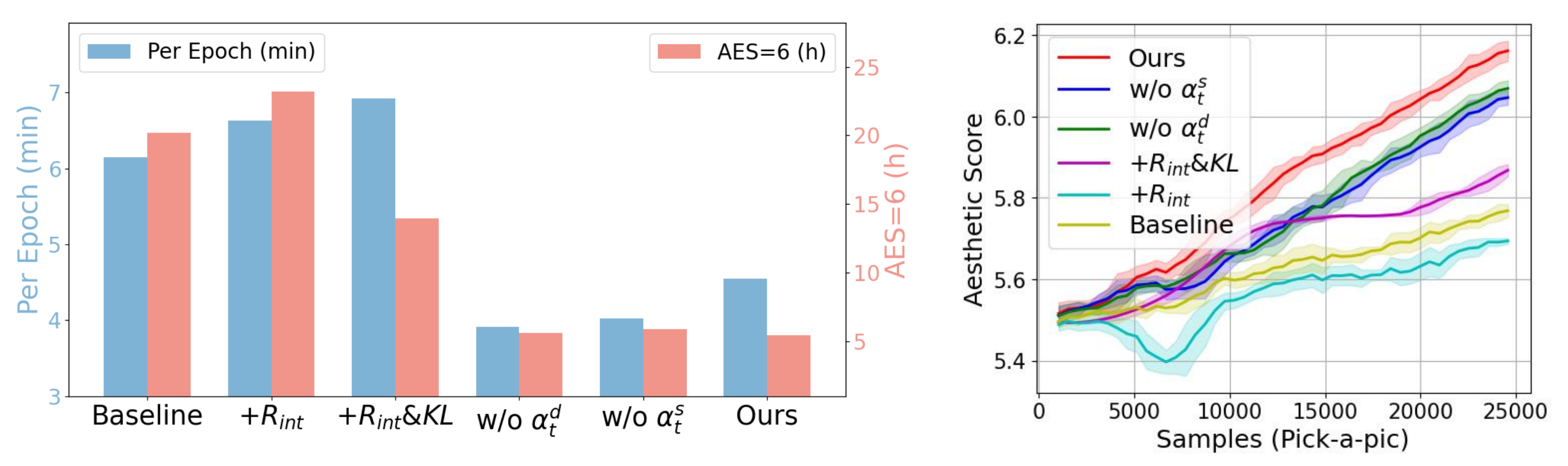}
    \caption{Ablation Study. \textbf{Left}: Computational cost per epoch. \textbf{Right}: Reward curves of each model configuration.}
    \label{fig:abl}
\end{figure}
\begin{table}[htbp]
\caption{Plug-and-Play Effectiveness. We report the time consumption (hours) for the Aesthetic Score to reach 6, and the aesthetic score with 2e4 training samples.}\label{tab:plug}
\small
\begin{tabular}{lcc}
\toprule
          & Time$^\downarrow$ for AES=6 & AES$^\uparrow$ for Sample=2e4 \\ \midrule
DDPO      &                6.67&                    5.64\\
DDPO+Ours &                2.22 (- 66.67\%)&                    5.90 (+ 4.61\%)\\ \midrule
DPOK      &                12.65&                    5.54\\
DPOK+Ours &                4.59 (- 63.71\%)&                    5.79 (+ 4.51\%)\\ \bottomrule
\end{tabular}
\end{table}

\begin{figure}[t]
    \centering
    \includegraphics[width=1\linewidth]{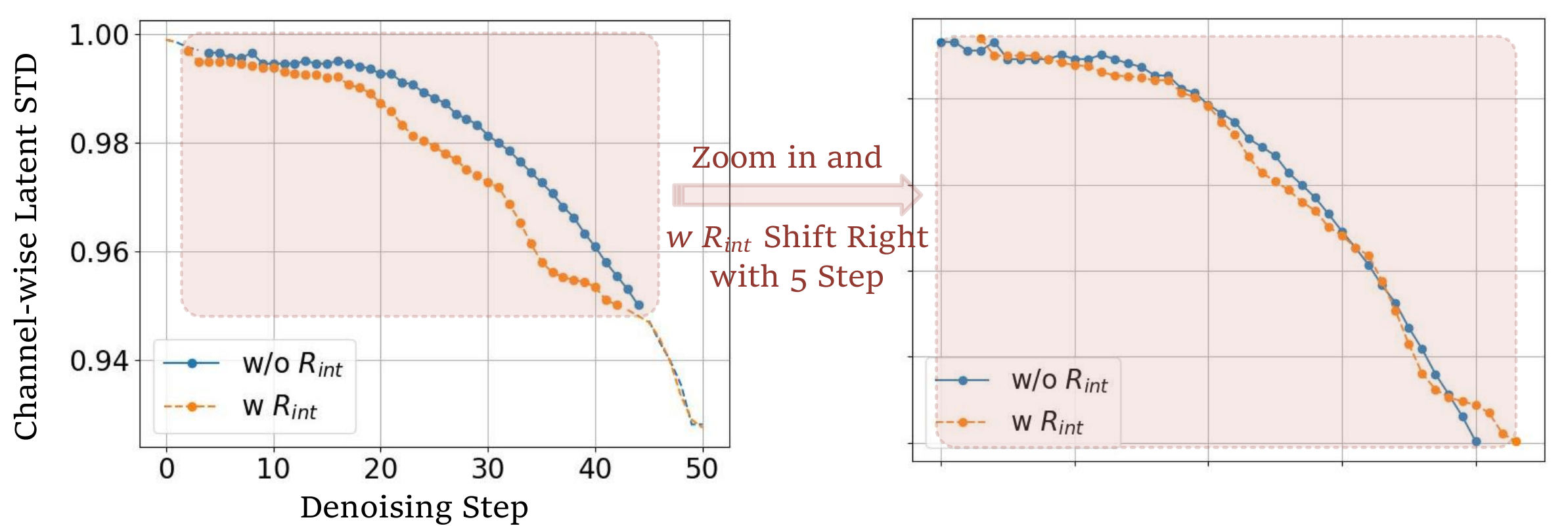}
    \caption{Visualization of channel-wise latent code for denoising acceleration. Standard deviation (STD) is employed to evaluate the state of the latent code across intermediate denoising steps. It can be observed that with $R_{int}$, the denosing steps is five less than the standard setting without $R_{int}$. }
    \label{fig:std}
    \vspace{-0.3cm}
\end{figure}
\begin{figure}[t]
    \centering
    \includegraphics[width=1\linewidth]{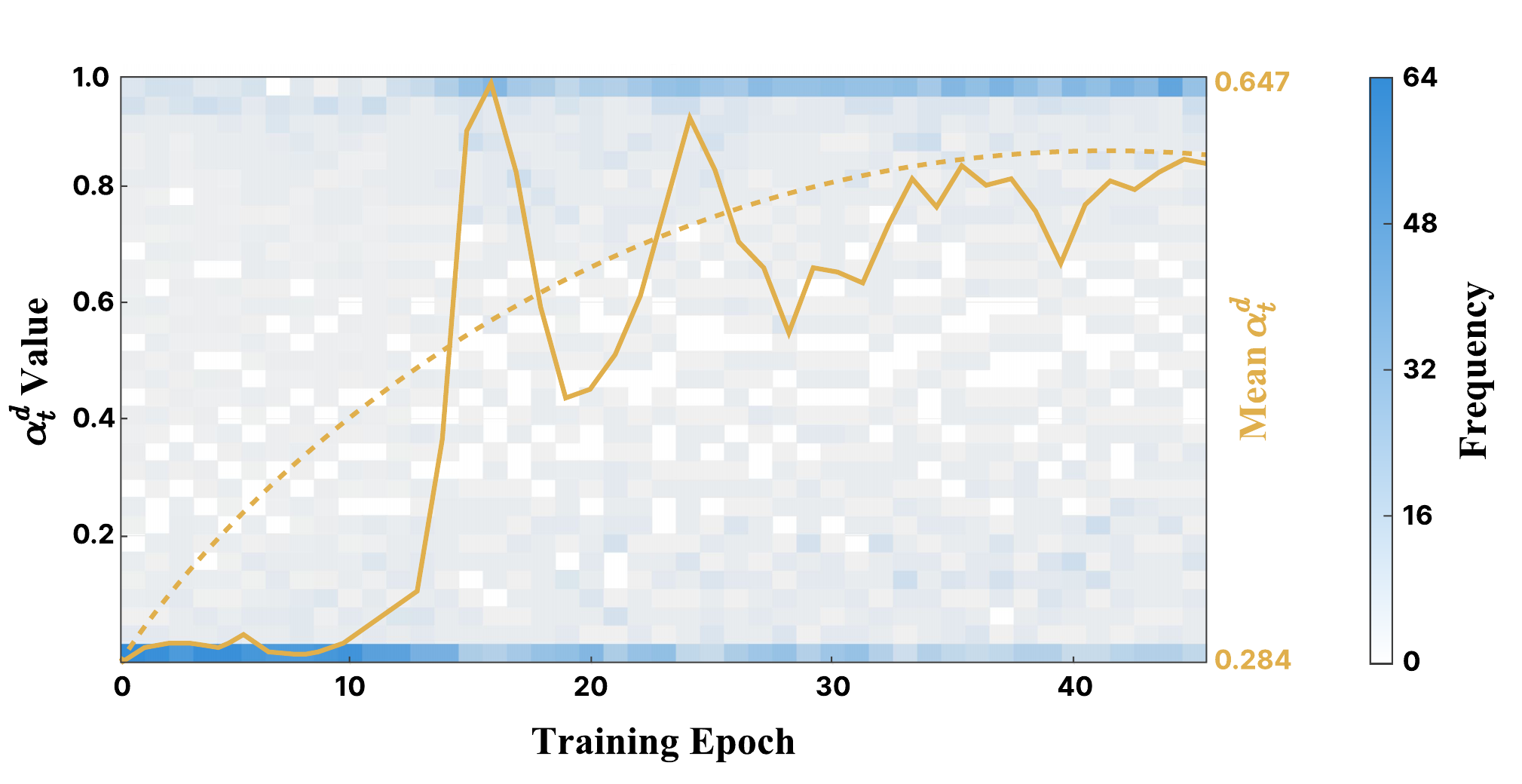}
    \caption{The distribution and trend of $\alpha^d_t$ from a frozen $f(\cdot)$ during fine-tuning with $R_{int}$. It can be observed that the distributions are gradually shifting toward 1, which indicates the denoising is accelerating during training. }
    \label{fig:predict}
\end{figure}
\subsection{Experiments on Training Acceleration}
\label{RLobj}
We compare the performance of \ourmethod{} with several reinforcement learning baseline methods for optimizing preset RL objectives (\textit{i.e.}, Aesthetic Score and PickScore). The experiments are conducted on SDv15, with Pick-a-pic and HPSv2 as the fine-tuning prompt set. More visual results and dataset can be found in Supplementary Material.
As shown in Fig.~\ref{fig:main}, our method consistently optimizes the target reward more efficiently under the same number of training samples, demonstrating higher training sample utilization. Additionally, we report the per-epoch training time of different methods on both HPSv2 and Pick-a-pic datasets, as well as the total time required to reach the predefined objective of AES = 6 or PickScore=22. As shown in Fig.~\ref{fig:time}, our method significantly outperforms existing approaches in optimization efficiency, while the extra computational cost introduced by LLM is rather trivial. Meanwhile, we found that the noun–attribute decomposition is trivial for modern models. In our experiments, both DeepSeek-V3 and Qwen2.5-32B-Instruct achieved 100\% consistency, indicating that this step can be performed reliably without introducing additional variance.

\subsection{Ablation Study}

For ablation experiments, we design the following variants: 1) Baseline: DDPO without any modification; 2) $+R_{int}$: include intrinsic reward only; 3) $+R_{int}\&KL$: apply $R_{int}$ and KL divergence; 4) w/o $\alpha^s_t$: include $R_{int}$, KL, and $\alpha^d_t$; 5) w/o $\alpha^d_t$: adopt $R_{int}$, KL, and $\alpha^s_t$. As shown in Fig.~\ref{fig:abl}-Right, the over-exploration of the intrinsic rewards causes fluctuations in the early training stage. Then, with the help of KL divergence, an exploration-exploitation balance is achieved. Overall, the results clearly show that as the number of components increases, the proposed method gradually exhibits improved optimization performance. We provide further ablations on the influence of KL and $R_{\text{int}}$ in Appendix Fig~\ref{fig:sd15_abl_rebuttal}.
The computational results in Fig.~\ref{fig:abl}-Left indicate the effectiveness of the adaptive episode control strategy, while considering both factors to determine the episode length could achieve the best reward learning efficiency.

\begin{figure}[t]
    \centering
    \includegraphics[width=1\linewidth]{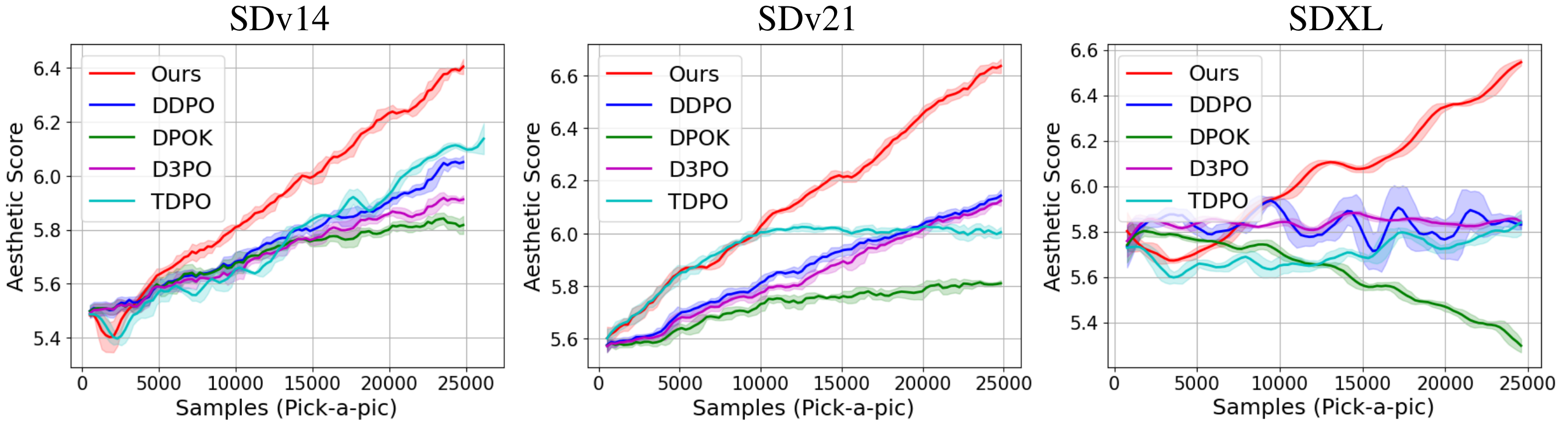}
    \caption{Transferring to different diffusion backbones.}
    \label{fig:backbones}
\end{figure}
\subsection{Analysis of PAST}
\noindent \textbf{Denoising acceleration.} 
In DDIM, the generated images are decoded from the noisy-to-clean latent code. To verify the denoising acceleration of introducing $R_{int}$, we first compare the differences in the latent code distribution before and after fine-tuning with $R_{int}$. Although the latent code distribution for each denoising step is determined, the channel-wise latent code distribution is not explicitly constrained. Therefore, we analyze the channel-wise standard deviation (std) of the latent code. As shown in Fig.~\ref{fig:std}, the std of w $R_{int}$ is closer to that of the future steps compared with w/o $R_{int}$. For a clearer illustration, by shifting the $R_{int}$ curve, it closely overlaps with the curve of w/o $R_{int}$ after five steps. This further demonstrates that using $R_{int}$ enables the denoising state, \textit{i.e.}, latent code, to approach the real distribution faster, thus highlighting the acceleration effect of the intrinsic reward on denoising.

\begin{figure*}[ht]
    \centering
    \includegraphics[width=1\linewidth]{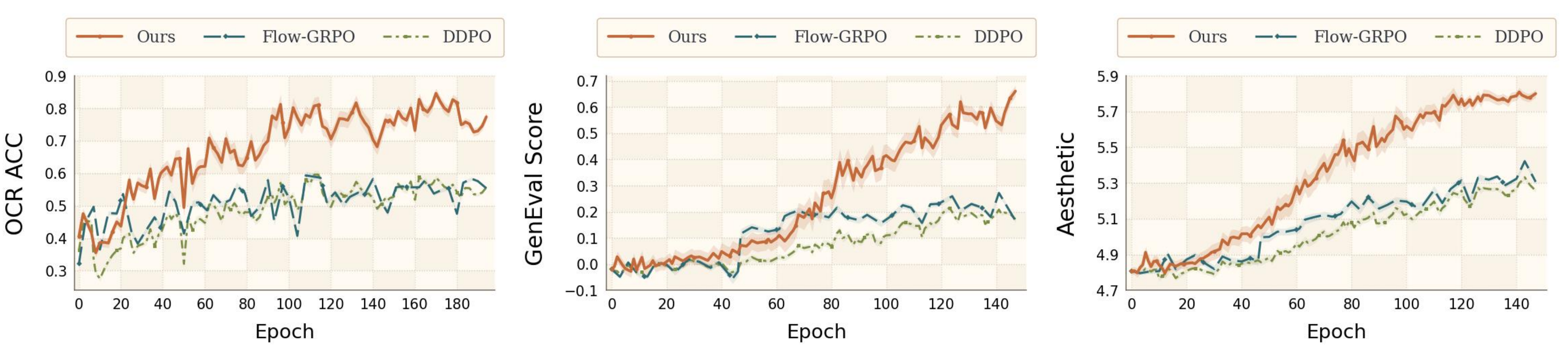}
    \caption{Performance on a \underline{flow-matching} model (SD3.5). Results on OCR, GenEval, and Aesthetic objectives show that \ourmethod{} remains consistently effective under diverse evaluation targets.}
    \label{fig:flow-backbone}
    \vspace{-0.4cm}
\end{figure*}
\begin{table}[t]
\caption{Analysis on episode length control strategies.}\label{tab:step}
\centering

    \setlength{\tabcolsep}{4.7pt}  
\begin{tabular}{lccccccc}
\toprule
Method  & AES  & PS    & IR   & CLIP  & IS    & TCE   & LPIPS \\\midrule
DDPO-F & 6.06 & 19.74 & 0.76 & 0.286 & 13.50 & 37.16 & 0.580 \\
DDPO-C   & 6.50 & 21.72 & 1.00 & 0.307 & 13.97 & 39.29 & 0.579 \\
Ours-F & 6.17 & 20.89 & 0.79 & 0.294 & 13.69 & 38.41 & 0.581 \\
Ours-C   & 6.50 & 21.84 & 1.01 & 0.312 & 14.12 & 39.66 & 0.587 \\ \midrule
Ours-A & 6.47 & 22.27 & 0.98 & 0.313 & 14.09 & 40.16 & 0.593 \\ \bottomrule
\end{tabular}
\vspace{-0.2cm}
\end{table}
For denoising effectiveness, we design the following experiment: Considering that $\alpha^d_t$ predicts the degree of denoising, we introduce a pre-trained and frozen noise predictor to monitor the fine-tuning process with $R_{int}$. As shown in Fig.~\ref{fig:predict}, for each iteration with 50 denoising steps completed, the distribution of $\alpha^d_t$ gradually shifted towards 1 as training progressed, with the mean increasing from 0.3 to 0.6. This further demonstrates that the denoised samples successfully appear earlier in the denoising process.

\begin{figure*}[t]
    \centering
    \includegraphics[width=1\linewidth]{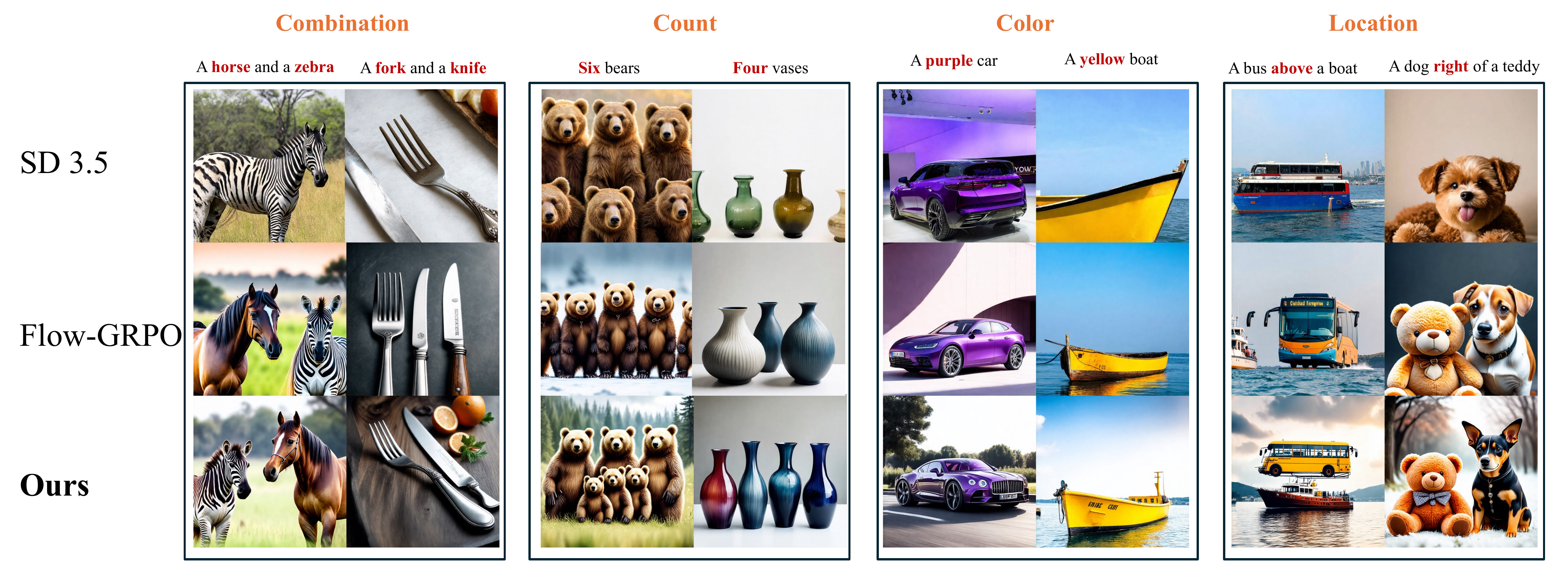}
        \vspace{-0.3cm}
    \caption{Conditional Generation on the \underline{flow-matching} Diffusion Model (SD3.5). Compared to baseline methods, \ourmethod{} better follows instructions when prompts explicitly define combinations, counts, colors, and locations.}
    \label{fig:past_35_align}
    \vspace{-0.2cm}
\end{figure*}

\begin{figure}[t]
    \centering
    \includegraphics[width=1\linewidth]{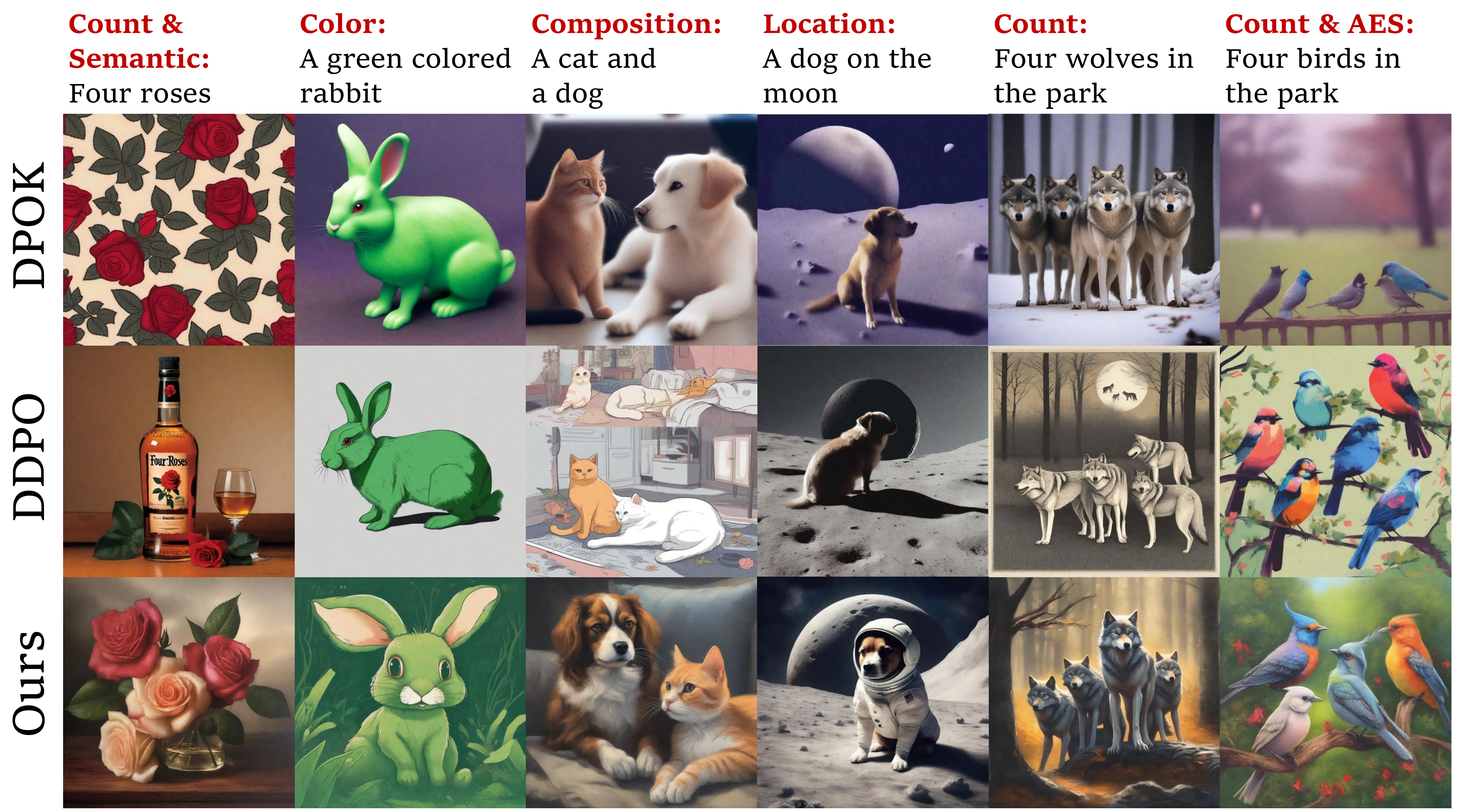}
    \caption{Semantic alignment on unseen prompts. We exhibit superior alignment in multiple semantic content, including Count, Color, Composition, and Location.}
    \label{fig:vis-align}
    \vspace{-0.75cm}
\end{figure}

\noindent \textbf{Two-factor episode length control.}
Here, we evaluate the cases of adaptive episode, complete episode, and fixed episode during the inference phase, with image quality among the following conditions: Ours with adaptive episode (Ours-A), Ours with fixed episode (Ours-F), Ours with complete episode (Ours-C), DDPO with fixed episode (DDPO-F), and DDPO with complete episode (DDPO-C). For a fair comparison, we exhibit the results when the AES score of DDPO and Ours reaches 6.5.
Since the average number of steps for adaptive episode is 31, the fixed episode method was set to 31 steps. In Tab.~\ref{tab:step}, we first observe that using fixed episode results in a significant drop in metrics for both DDPO and Ours. However, with the adaptive episode scheme, the same average denoising length achieves results comparable to those with complete denoising. Furthermore, with identical aesthetic scores for the corresponding original models (\textit{i.e.}, DDPO-C and Ours-C), Ours-F outperforms DDPO-F in all metrics. This further suggests that \ourmethod{} accelerates the denoising process and reaches the true distribution earlier.\\
\noindent \textbf{Plug-and-Play effectiveness.}
Considering \ourmethod{} is model-agnostic, we deploy our method to the popular DDPO and DPOK. In Tab.~\ref{tab:plug}, adding our method can effectively reduce the overall computational cost and enhance the reward learning performance. Further elaboration on efficiency is available in Appendix.~\ref{Efficiency}.

\subsection{Transfer Study}
\noindent \textbf{To other backbones.} Here, we conduct experiments on SDv1.4, SDv2.1, and the more advanced SD-XL. As shown in Fig.~\ref{fig:backbones}, the superior results highlight that \ourmethod{} is not restricted to a specific backbone, but could be directly transferred to a wide range of backbone architectures.
Additionally, we evaluate \ourmethod{} on a flow-matching model (SD3.5). As shown in Fig.~\ref{fig:flow-backbone}, \ourmethod{} consistently improves performance on OCR, GenEval, and Aesthetic objectives, demonstrating strong generalization across both model paradigms and heterogeneous reward criteria.\\
\noindent \textbf{To other reward objectives.} Additional transfer results on different reward functions are provided in Appendix~\ref{appendix:reward_transfer}.\\

\subsection{Visual Quality}
Here, we evaluate fine-grained semantic alignment under conditional generation scenarios on both SD3.5 (Fig.~\ref{fig:flow-backbone}) and SDXL (Fig.~\ref{fig:vis-align}), confirming that we maintain robust consistency across semantic dimensions including count, color, composition, and location. This result demonstrates that explicitly leveraging prompt–image attention to facilitate model training, along with timely truncation, can endow the post-trained model with enhanced semantic alignment capability.
Additional visual comparisons on more diverse prompts, extended unseen-prompt evaluations, and generative diversity analyses can be found in Appendix Fig.~\ref{fig:vis-main}, \ref{fig:vis-unseen}, and \ref{fig:div_all_past}.

\begin{figure}[htbp]
    \centering
    \includegraphics[width=0.9\linewidth]{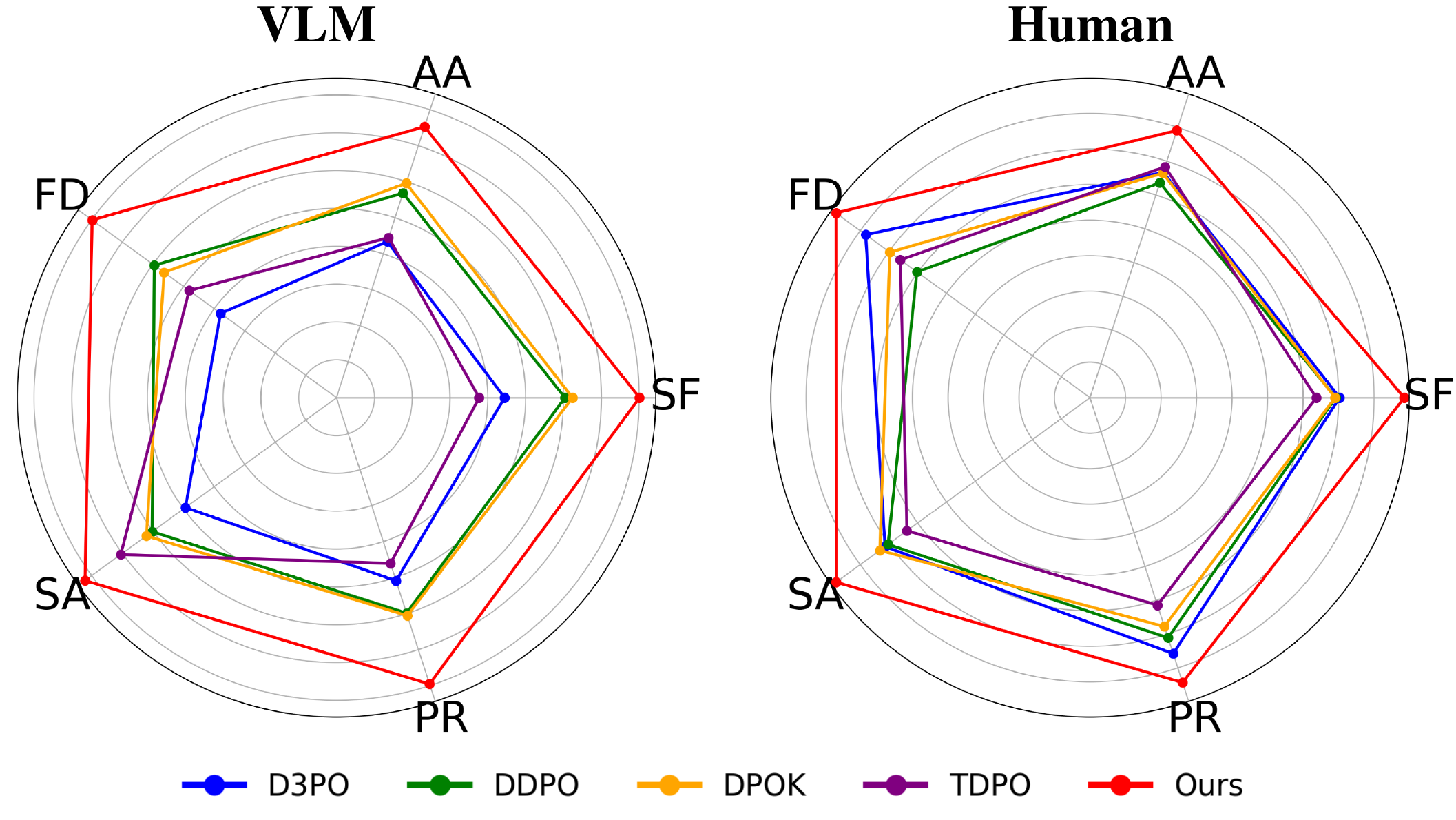}
    \caption{Subjective Evaluation on Human and VLM.}
    \label{fig:user}
    \vspace{-0.4cm}
\end{figure}
\subsection{Subjective Evaluation}
Subjectively, we employed two assessments: Human scoring and Large Vision Model (LVM) scoring. All images were generated by the fine-tuned SDv15 model from HPSv2 prompts. The evaluation covered five dimensions—Structural Faithfulness (SF), Aesthetic Appeal (AA), Fine-grained Detail (FD), Semantic Alignment (SA), and Prompt Responsiveness (PR)—rated independently by humans and ChatGPT. Additional evaluation details can be found in the \textit{Supplementary Material}. In Fig.~\ref{fig:user}, \ourmethod{} consistently outperforms alternatives across both assessment approaches and metrics.

\section{Conclusion}
In this paper, we propose \ourmethod{}, an efficient plugin for RL fine-tuning of diffusion models, which involves an adaptive episode control to enable a broader search range for high reward policy preserving real distribution. By introducing fine-grained intrinsic rewards and an adaptive trade-off strategy, it effectively alleviates the issues of sparse rewards and high training cost. Experimental results show that our method significantly reduces computational overhead while improving downstream performance.

\clearpage

\bibliographystyle{ACM-Reference-Format}

\bibliography{refer.bib}

\clearpage

\appendix

\clearpage

\onecolumn

\section{Additional Related Work}
\label{appendix:related_work}

\subsection{RL Fine-tuning for Diffusion Models}
Traditional likelihood-based training objectives can lead to a decline in image quality, as likelihood values often fail to reflect visual quality~\cite{black2023training} accurately. Reinforcement learning (RL) methods build on pre-trained text-to-image diffusion models, further fine-tuning them to directly optimize specific downstream task objectives, making generated images better aligned with task requirements. Among these, reward model-based methods, such as DDPO~\cite{black2023training} and DPOK~\cite{fan2024reinforcement}, optimize generation quality by constructing reward functions to evaluate model outputs. On the other hand, human feedback-based methods, such as D3PO~\cite{yang2024using}, Diffusion-DPO~\cite{wallace2024diffusion}, and HPS~\cite{wu2023human}, leverage human preference scores directly, avoiding the complexity of reward modeling and aiming to align generation more closely with human aesthetic preferences. However, these approaches all face issues with delayed and sparse rewards or feedback~\cite{franceschelli2024reinforcement} and the balance between exploration and exploitation~\cite{jena2024elucidating}.
~\textbf{Although existing RL fine-tuning methods demonstrate significant advantages in improving the generation quality and target alignment of diffusion models, the additional computational overhead they introduce remains non-negligible and has become a key obstacle to their practical deployment~\cite{kim2025test}}.

\subsection{Intrinsic Motivation}
In reinforcement learning tasks, intrinsic rewards~\cite{pathak2017curiosity,burda2018exploration} are commonly used to guide agents in effective exploration when extrinsic feedback is lacking or rewards are sparse. Intrinsic rewards are independent of extrinsic environment rewards, providing reward signals based on the agent's exploratory behavior or interaction states in intermediate steps. This encourages more diverse exploration within the task space, preventing training stagnation or low learning efficiency caused by insufficient extrinsic rewards. Intrinsic rewards bypass direct evaluation of specific real rewards on intermediate generated images and instead focus on exploration of intermediate states, agnostic to particular optimization objectives. Introducing intrinsic rewards usually does not alter the original optimal strategy under extrinsic rewards~\cite{yan2024adazero}. However, a well-designed intrinsic reward mechanism for diffusion models is crucial, as existing intrinsic rewards are based on the agent's trial-and-error settings and cannot be directly applied to diffusion models. By fully considering the generative nature of diffusion models, this work devises a general-purpose intrinsic reward scheme specifically customized for their optimization to overcome the reward sparsity issue in optimizing diffusion models.

\subsection{Reward Hacking during RL Fine-tuning}
In the RL fine-tuning of T2I tasks, reward hacking also remains a non-negligible issue, where the model maximizes the reward score but generates outputs that lack diversity or authenticity~\cite{skalse2022hacking,jena2024elucidating}. For example, when the model overly prioritizes specific reward signals (such as image aesthetics or text alignment), it can lead to structural distortions, loss of detail, or uniform patterns in generated images, ultimately compromising generation quality. 
Reward hijacking has long plagued RL-enhanced diffusion models and remains unresolved. In this work, we introduce intrinsic rewards to encourage exploration in intermediate stages, providing fine-grained optimization signals during denoising. This alleviates over-reliance on the sparse and delayed final rewards and significantly reduces the risk of hijacking. We validate this mechanism through visualization experiments (see Fig. ~\ref{fig:hacking}), showing that even intrinsic rewards alone can effectively mitigate reward hijacking.

\section{Supplementary Experimental  Details and Analysis}
\subsection{Experiment List in Our Paper}
To help readers quickly grasp the extensive experiments conducted in this work, we summarize the full list of experiments below.
\begin{itemize}
    \item \textbf{(1) Visualization Experiments.} See Fig.~\ref{fig:intro}. This experiment provides a solid justification for the motivation of this work.

    \item \textbf{(2) Visualization Experiments.} See Fig.~\ref{fig:std}. This experiment provides a promising demonstration of the denoising acceleration achieved by our method.

    \item \textbf{(3.1) Impression of Reward Hacking} See Tab.~\ref{fig:hacking}. We provide a visual impression for a better understanding of reward hacking, and also the performance of introducing intrinsic reward to mitigate reward hacking.

    \item \textbf{(3.2) Ensemble Experiments.} See Tab.~\ref{tab:plug}. This table further shows the performance gains obtained by incorporating our RL-based enhancement plug-in into existing approaches.

    \item \textbf{(4) Dataset Switching Experiment.} See Fig.~\ref{fig:main}. To verify that the superiority of our method is not affected by differences in datasets or task difficulty, we conduct experiments across multiple datasets.

    \item \textbf{(5) Backbone Switching Experiment.} See Fig.~\ref{fig:backbones}. We provide this experiment to verify that the advantages of our method are not affected by replacing the backbone to be fine-tuned.

    \item \textbf{(6) Reward Switching Experiment.} See Fig.~\ref{fig:objectives} and \ref{fig:main}. We verify that the advantages of our method are not affected by replacing the reward model.

    \item \textbf{(7.1) Ablation Experiment.} See Fig.~\ref{fig:abl}. We present the impact of each module in our method on both performance and computational cost.

    \item \textbf{(7.2) Effect of different episode length control strategies.} See Fig.~\ref{fig:predict}. We present the impact of different episode length control strategies on diverse evaluation metrics.

    \item \textbf{(8) Computational Efficiency.} See Fig.~\ref{fig:time}. In this experiment, we comprehensively demonstrate the superior computational cost of our method.

    \item \textbf{(9) Semantic Alignment Visual Experiment.} See Fig.~\ref{fig:vis-align}. This experiment verifies the semantic alignment capability of our method.

    \item \textbf{(10) Unseen-Prompt Generalization.} See Fig.~\ref{fig:vis-unseen}. This experiment verifies our method’s ability to handle complex prompts.

    \item \textbf{(11) Diversity Visual Experiment.} See Fig.~\ref{fig:div_all_past}. This experiment compares our method with state-of-the-art approaches in terms of diversity. Validating our claim that it mitigates reward hacking and prevents excessive diversity degradation during fine-tuning.

    \item \textbf{(12) Human Evaluation Experiment.} See Fig.~\ref{fig:user}. We present a human evaluation experiment assessing our method against the baseline methods.

\end{itemize}

\subsection{Supplementary Analysis on Optimization Efficiency}
\label{Efficiency}
Experimental results show that \ourmethod{} significantly improves the overall efficiency of diffusion model fine-tuning, reflected in two aspects: \textbf{(1) improved training time efficiency} (see Fig.~\ref{fig:time}; Fig.~\ref{fig:std}; Fig.~\ref{fig:predict}), and \textbf{(2) higher reward acquisition efficiency and final reward level} (see Fig.~\ref{fig:main}; Fig.~\ref{fig:backbones}; Fig.~\ref{fig:objectives}).

First, the improvement in \textbf{time efficiency} mainly comes from the acceleration of the denoising process by the intrinsic reward mechanism, and the adaptive episode length scheduling strategy built upon it. This strategy actively identifies and discards inefficient samples during the marginal gain phase, effectively reducing redundant training steps from the perspective of reinforcement learning policy optimization, thereby lowering computational and time costs.

Second, the improvement in task \textbf{reward acquisition efficiency} mainly comes from the joint effect of two key mechanisms: a dynamic dual trade-off strategy and an adaptive episode termination mechanism.
Previous studies have shown that overemphasizing exploration while ignoring exploitation may lead to reduced training efficiency and even harm the exploration policy itself; whereas excessive reliance on exploitation without sufficient exploration may cause the policy to fall into local optima, missing the potential for near-global optimality~\cite{taiga2021bonus,dabney2020temporally}.
Therefore, building a proper exploration–exploitation trade-off mechanism is essential for improving policy quality.
Based on this motivation, we propose the following dual trade-off strategy:
\begin{itemize}
\item Dynamic coordination between intrinsic and extrinsic rewards
\item Balancing novel trajectory exploration and stable convergence to the real distribution (via KL regularization)
\end{itemize}

Through these two trade-off mechanisms, \ourmethod{} enables dynamic exploration–exploitation coordination at different stages: in the early generation phase, the policy encourages exploration to discover more potential high-quality generation paths; in the later phase, it gradually shifts to stable exploitation, focusing on high-reward regions within the real image distribution (guaranteed by KL regularization), thereby improving final generation quality and task performance. This is the essential reason for the higher reward curves of \ourmethod{}. In the ablation study shown in Fig.~\ref{fig:abl}, both w/o $\alpha^d_t$ and w/o $\alpha^s_t$ perform worse than the full \ourmethod{}, further confirming the positive effect of the dynamic trade-off on reward improvement.



\subsection{Plug-and-Play Experiment Analysis}
As shown in Tab.~\ref{tab:plug}, our method is plug-and-play because it requires no modification to any components or strategies of the base approach. Instead, it adds an intrinsic reward to accelerate denoising and uses adaptive episode control to regulate the optimization length of the base method itself. The introduced components effectively identify redundant samples and prevent them from interfering with fine-tuning, thereby achieving strong plug-and-play acceleration performance.

\subsection{Hyper-parameters}

The main hyperparameters in our paper is shown in Table~\ref{tab:hyper}.
For each method and each RL objective, we ran five different seeds and report the mean and standard deviation of reward on 64 randomly sampled prompts as validation set. We train each model with a total of 25000 samples. Following~\cite{black2023training}, we use the LAION aesthetics predictor for conducting the aesthetics experiments.

\subsection{Implementation details of perception model}

In Subsection~\ref{sec:21}, we introduced a Denoise-Aware Model $f(\cdot)$ that learned to predict the dynamic coefficient of the intrinsic reward according to how close the image in the current de-noising step is to the fully de-noised real images. The architecture of the perception model is a simple four-layer CNN followed by a fully connected layer and a Sigmoid layer for binary classification. The kernel sizes are (3, 3, 3, 1), and channels are (64, 128, 256, 256).

To train the perception model, we update its parameters after each optimization step of the RL loss, with data from the current de-noising chain. To construct the training data, given a $T$-step de-noising process, we take the first $n$ images near $\{\mathbf{x}_T\}$ as negative samples and the last $n$ images $\{\mathbf{x}_0\}$ as positive samples, where $n=10$.
In this manner, we set the intrinsic rewards to zero for almost real images, and also obtain a denoise evaluator for adaptive episode length control.

\subsection{Subjective Evaluation Details}

For a more accurate evaluation of the models after fine-tuning for alignment, we conducted subjective evaluations. As shown in Fig.~\ref{fig:Interface}, we employed the user interface to collect human feedback. This interface facilitated the collection of 1296*5 valid data points from multiple methods in the user experiment, and the mean values were calculated for analysis and presentation.

For Large Vision Model evaluation, we deploy ChatGPT-4o with the following definitions of each attribute:
\begin{itemize}
    \item \textbf{Semantic Alignment}: Does the generated image accurately and comprehensively convey the key entities, scenes, and actions described in the text? (Measures the degree of direct correspondence between the textual content and the visual output.)
    
    \item \textbf{Structural Faithfulness}: Does the image exhibit any structural anomalies, such as disproportionate elements, extra limbs, or misaligned backgrounds? (Assesses the logical coherence and structural plausibility of the visual composition.)
    
    \item \textbf{Aesthetic Appeal}: Which image demonstrates superior appeal in terms of visual style, composition, color harmony, and overall aesthetics? (Reflects traditional notions of visual attractiveness.)
    
    \item \textbf{Fine-grained Detail}: Which image exhibits greater finesse and naturalism in rendering textures, materials, shadows, and other fine-grained visual details? (Reflects perceptual resolution and detail fidelity.)
    
    \item \textbf{Prompt Responsiveness}: Which image more accurately reflects the attribute constraints specified in the prompt, such as color, quantity, or action? (Used to evaluate the controllability and precision of prompt adherence.)

\end{itemize}

We first feed the above definitions to the LVM. Then, we upload the images to LVM and obtain the final rating.

\section{Supplementary Experimental Results}
\subsection{Transfer to Different Reward Functions}
\label{appendix:reward_transfer}
For transferability across different reward objectives, we further introduce additional rewards, including JPEG compressibility, incompressibility, and AES combined with PickScore (AES+PS), to perform multi-objective evaluations. In Fig.~\ref{fig:objectives}, the results on pick-a-pic and SDv14 demonstrate that \ourmethod{} consistently exhibits superior performance across all criteria.

\subsection{Detailed Ablation Study}
Here, we further analyze the weighting strategy using the SD1.5 ablation setting shown in Fig.~\ref{fig:sd15_abl_rebuttal}. The $R_{\text{int}}$-only variant encourages faster denoising exploration, but excessive exploration can induce collapse. In contrast, the KL-only variant maintains stable convergence yet tends to weaken preference improvement. Fixed-weight combinations partially inherit the benefits of both terms, but remain sensitive to manually selected coefficients; the shadowed curves illustrate this limitation across multiple fixed values. Our adaptive weighting resolves this trade-off by emphasizing $R_{\text{int}}$ in the early stage and progressively shifting toward preference reward with stable convergence in later stages, leading to stronger and more stable preference optimization. This behavior is consistent with our design principle in Sec.~\ref{sec:trade-off}: exploration becomes more effective when regulated by an adaptive mechanism.

\begin{figure}[t]
    \centering
    \begin{minipage}[t]{0.77\linewidth}
        \centering
        \includegraphics[width=\linewidth]{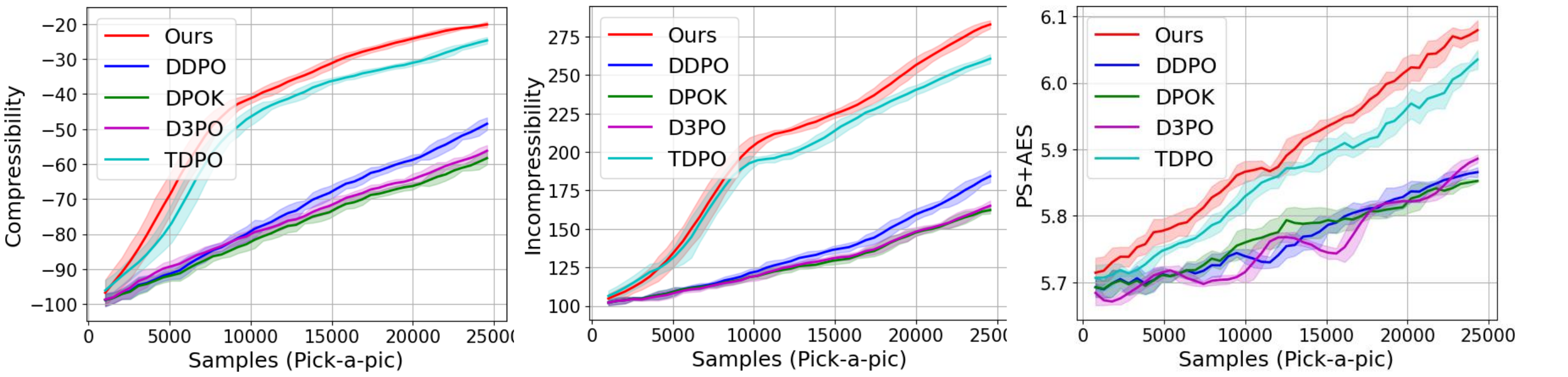}
        \captionof{figure}{Transferring to different reward functions.}
        \label{fig:objectives}
        \label{fig:objectives_transfer}
    \end{minipage}\hfill
    \begin{minipage}[t]{0.20\linewidth}
        \centering
        \includegraphics[width=\linewidth]{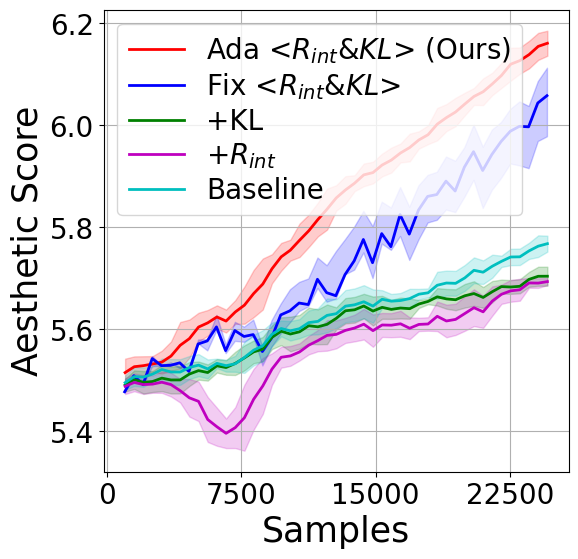}
        \captionof{figure}{Detailed ablation on KL and $R_{\text{int}}$}
        \label{fig:sd15_abl_rebuttal}
    \end{minipage}
\end{figure}
\subsection{Further Visual Quality Evaluation}
\noindent\textbf{Aesthetic quality.} Since AES is treated as the reward objective for fine-tuning, we present generated images to discuss the aesthetic visual quality with the training Pick-a-pic prompt set and SDv15. As shown in Fig.~\ref{fig:vis-main}, our method demonstrates enhanced details and a more impressive aesthetic quality while preserving semantic information.\\
\noindent\textbf{Visual quality on unseen prompts.} To demonstrate the generalization capability of the fine-tuned model, we conducted evaluations on SDxl using a set of prompts unseen during training. First, in Fig.~\ref{fig:vis-unseen}, we conduct visual experiments to discuss overall image quality. It can be observed that our method achieves superior image structure and aesthetic quality even with unseen prompts.\\
\noindent\textbf{Generative Diversity}
Finally, in Fig.~\ref{fig:div_all_past}, we show the generative diversity of different methods. It can be observed that our results achieve the best aesthetic quality while precisely maintaining the text-image alignment.

\begin{figure}[t]
    \centering
    \includegraphics[width=1\linewidth]{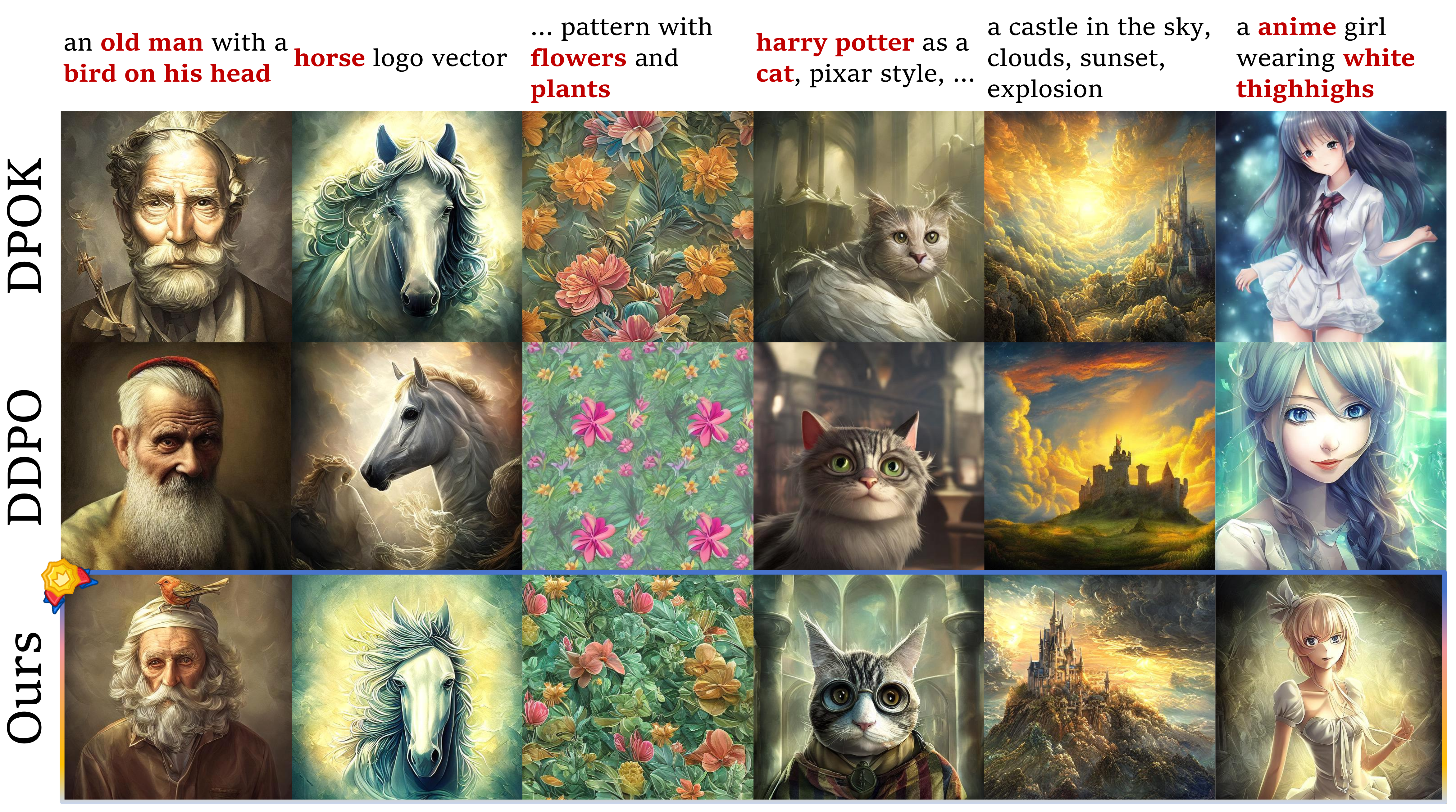}
    \caption{Aesthetic quality of generated images. }
    \label{fig:vis-main}
\end{figure}
\begin{figure}[t]
    \centering
\includegraphics[width=1\linewidth]{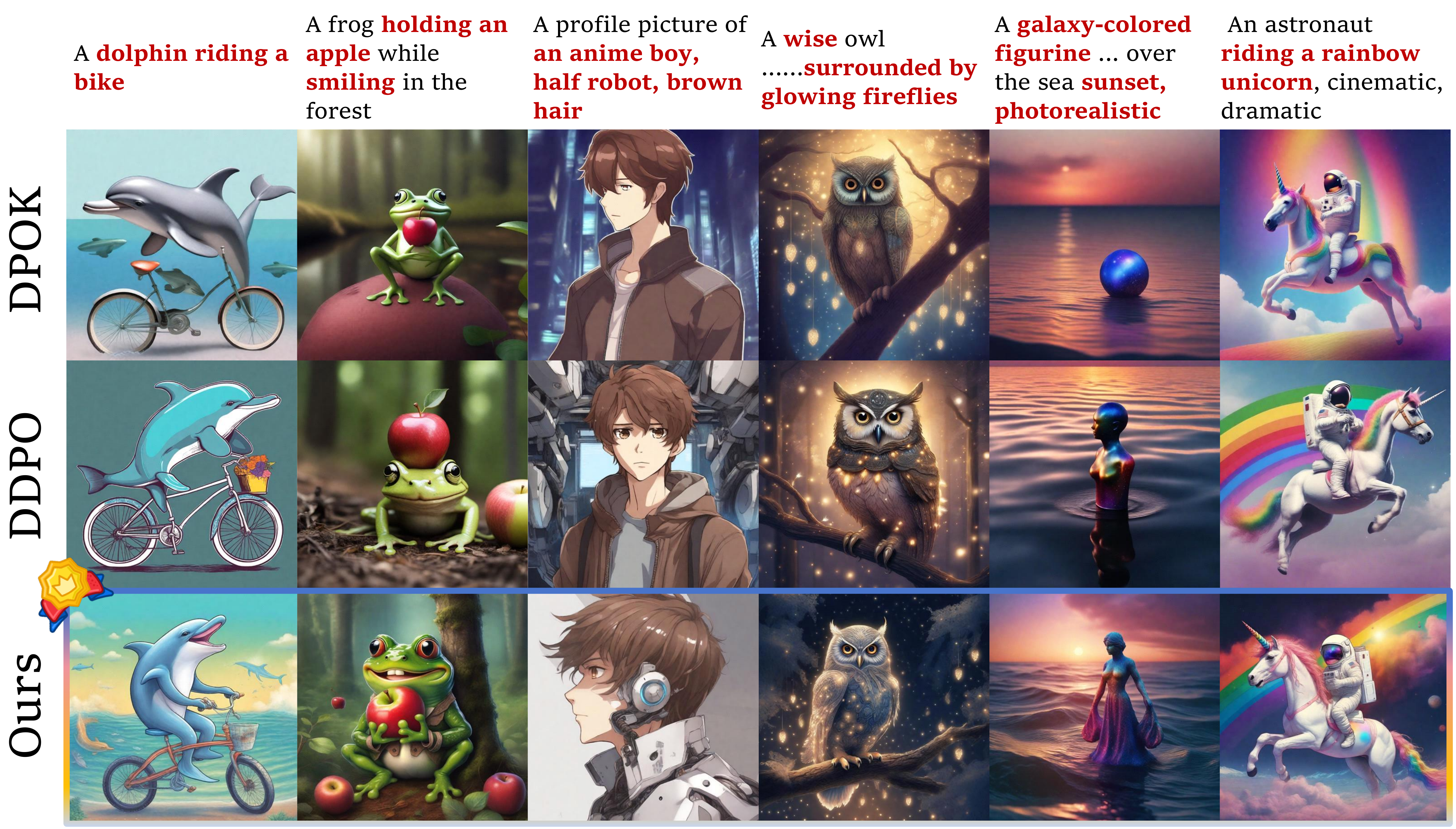}
    \caption{Overall visual quality on unseen prompts.}
    \label{fig:vis-unseen}
\end{figure}




\begin{table}[h!]
\centering
\begin{tabularx}{\linewidth}{|l|X|l|}
\hline
\textbf{Name} & \textbf{Description} & \textbf{Value} \\ \hline
\textit{lr} & learning rate of \ourmethod{} & 3e-4 \\ \hline
optimizer & type of optimizer & Adam \cite{kingma2014adam} \\ \hline
$\xi$ & weight decay of optimizer & 1e-4 \\ \hline
$\epsilon$ & Gradient clip norm & 1.0 \\ \hline
$\beta_1$ & $\beta_1$ of Adam & 0.9 \\ \hline
$\beta_2$ & $\beta_2$ of Adam & 0.999 \\ \hline
$T$ & total timesteps of inference & 50 \\ \hline
$bs$ & train batch size per GPU & 2 \\ \hline
$bs_{sample}$ & sample batch size per GPU & 16 \\ \hline
$n$ & number of batch samples per epoch & 4 \\ \hline
$\eta$ & eta parameter for the DDIM sampler & 1.0 \\ \hline
$G$ & gradient accumulation steps & 4 \\ \hline
$w$ & classifier-free guidance weight & 5.0 \\ \hline
$N$ & epochs for fine-tuning with reward model & 100 \\ \hline
$mp$ & mixed precision & fp16 \\ \hline
\end{tabularx}
\caption{Hyper-parameters in our experiment.}
\label{tab:hyper}
\end{table}

\begin{figure*}
    \centering
    \includegraphics[width=0.99\linewidth]{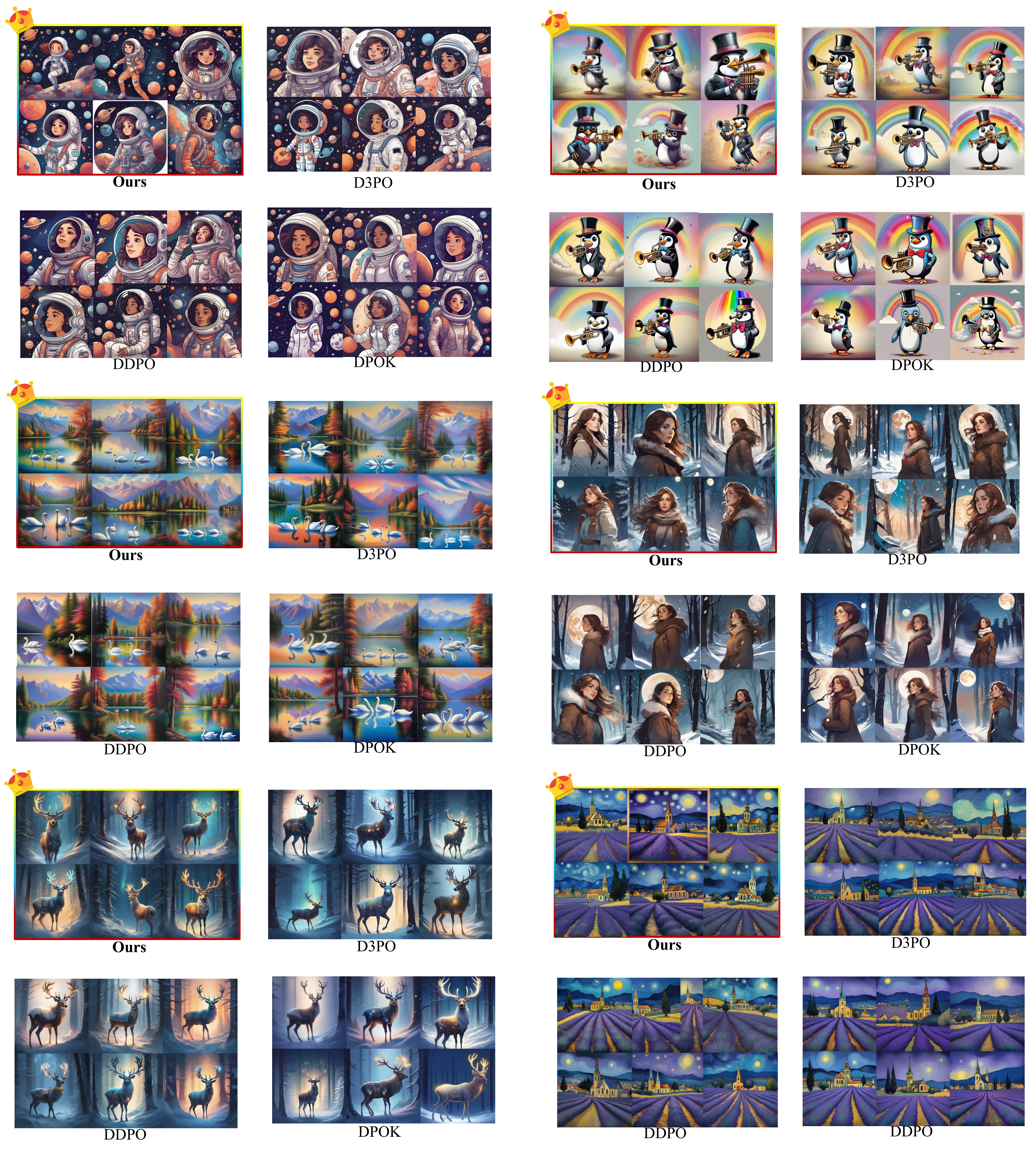}
    \caption{Diversity Experiments. Results demonstrate that \ourmethod{}  achieves superior diversity, notably in terms of object orientation, composition structure, and color richness.}
    \label{fig:div_all_past}
\end{figure*}

\begin{figure*}
    \centering
    \includegraphics[width=0.92\linewidth]{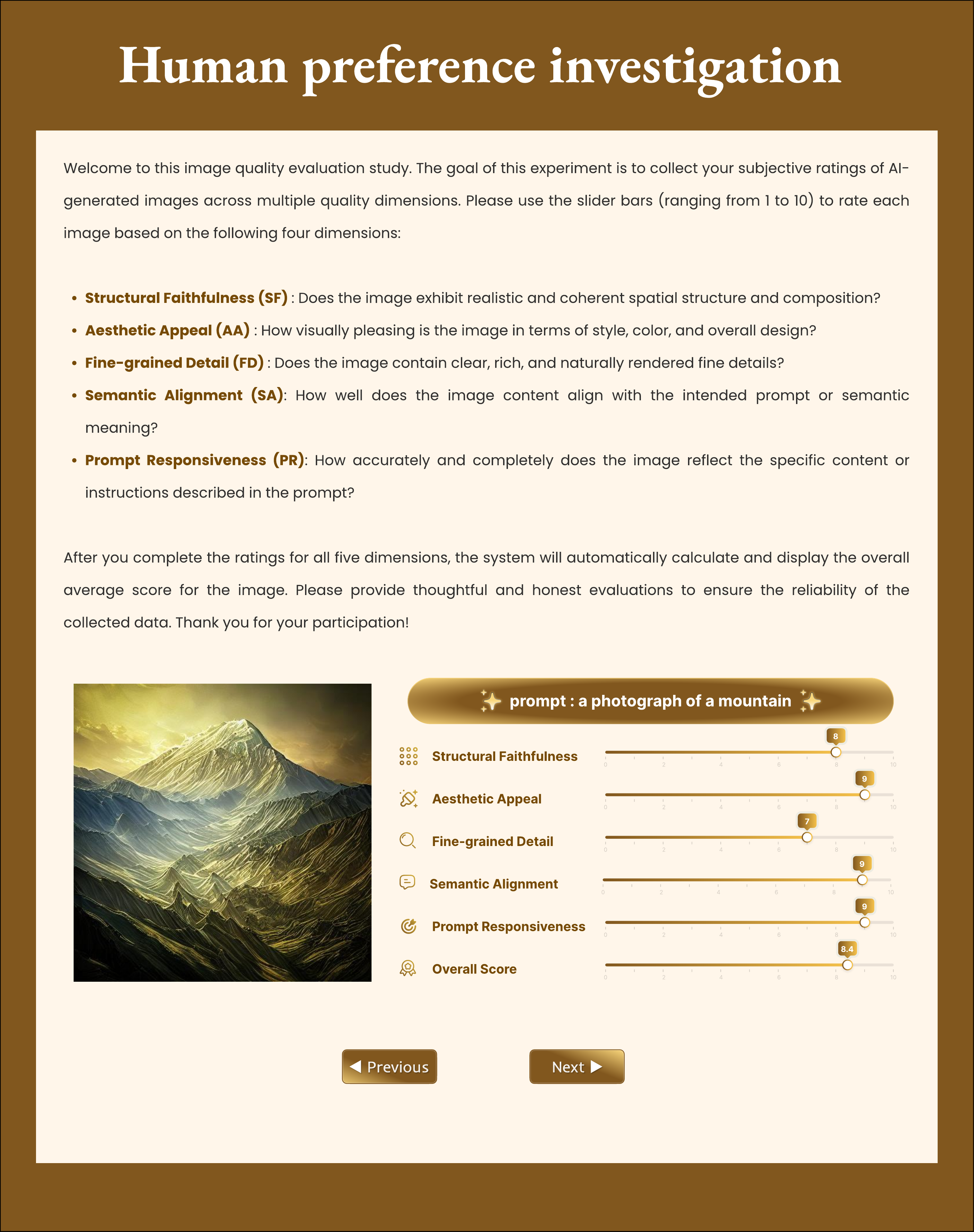}
    \caption{Interface for Subjective Evaluation}
    \label{fig:Interface}
\end{figure*}

\begin{figure*}
    \centering
    \includegraphics[width=0.99\linewidth]{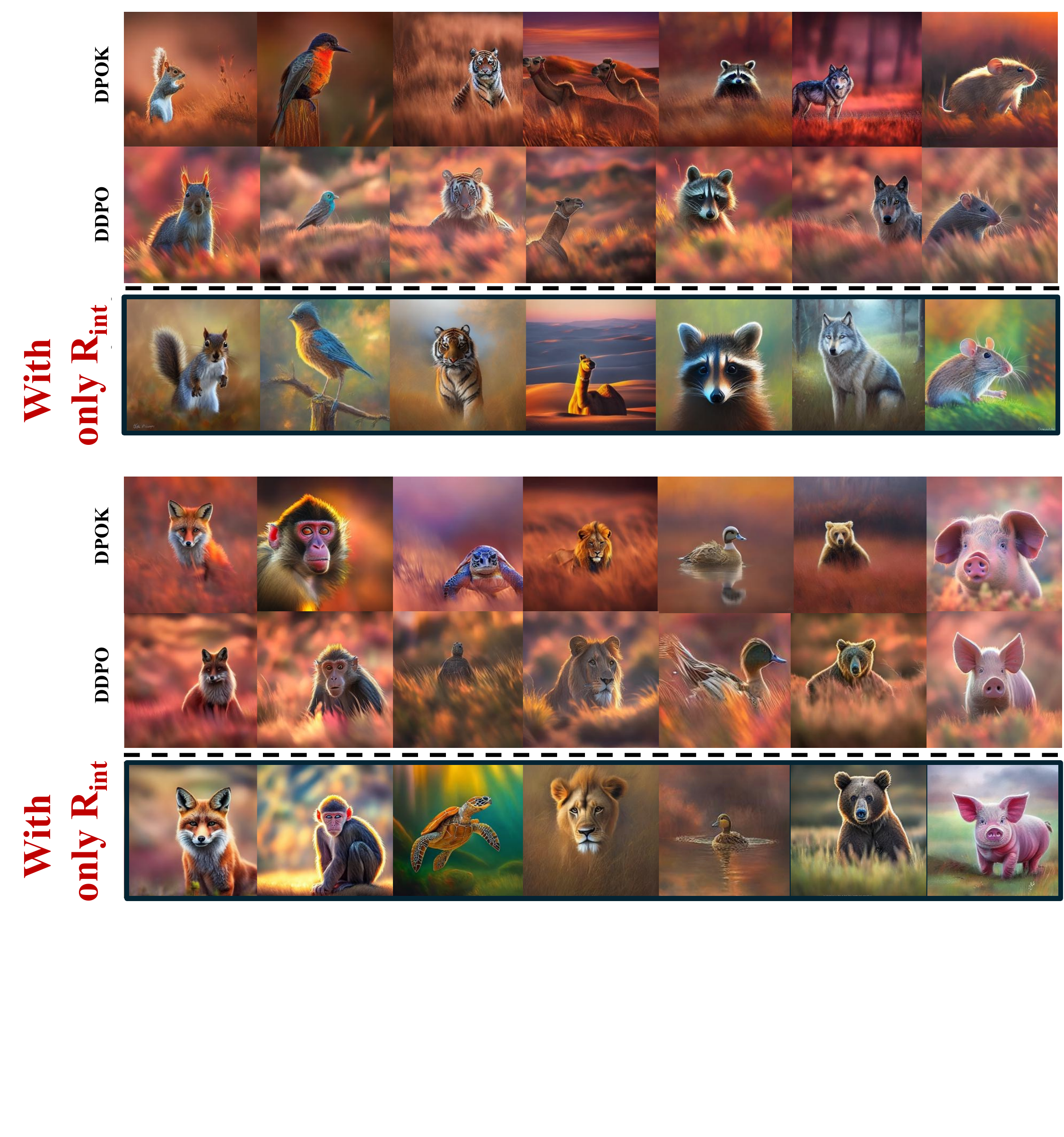}
    \caption{Intrinsic reward alleviates reward hacking: Using only intrinsic reward without other modules, we observe a significant reduction in reward hacking artifacts, notably the collapse into homogeneous styles and backgrounds.}
    \label{fig:hacking}
\end{figure*}

\begin{figure}[htbp] 
    \centering 
    \includegraphics[width=1\linewidth]{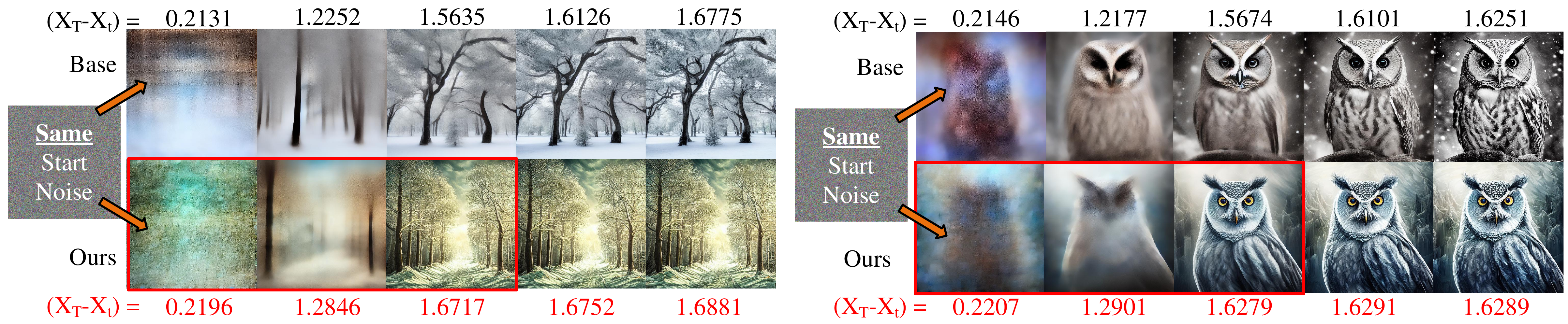} 
    \caption{Please zoom in. $R_{\text{int}}\left(\mathbf{s}_t, \mathbf{a}_t\right)$ effect on one denoising trajectory.} 
    \label{fig:tra_rebuttal} 
\end{figure} 

\noindent
\begin{minipage}{0.52\linewidth}
    \centering
    \includegraphics[width=\linewidth]{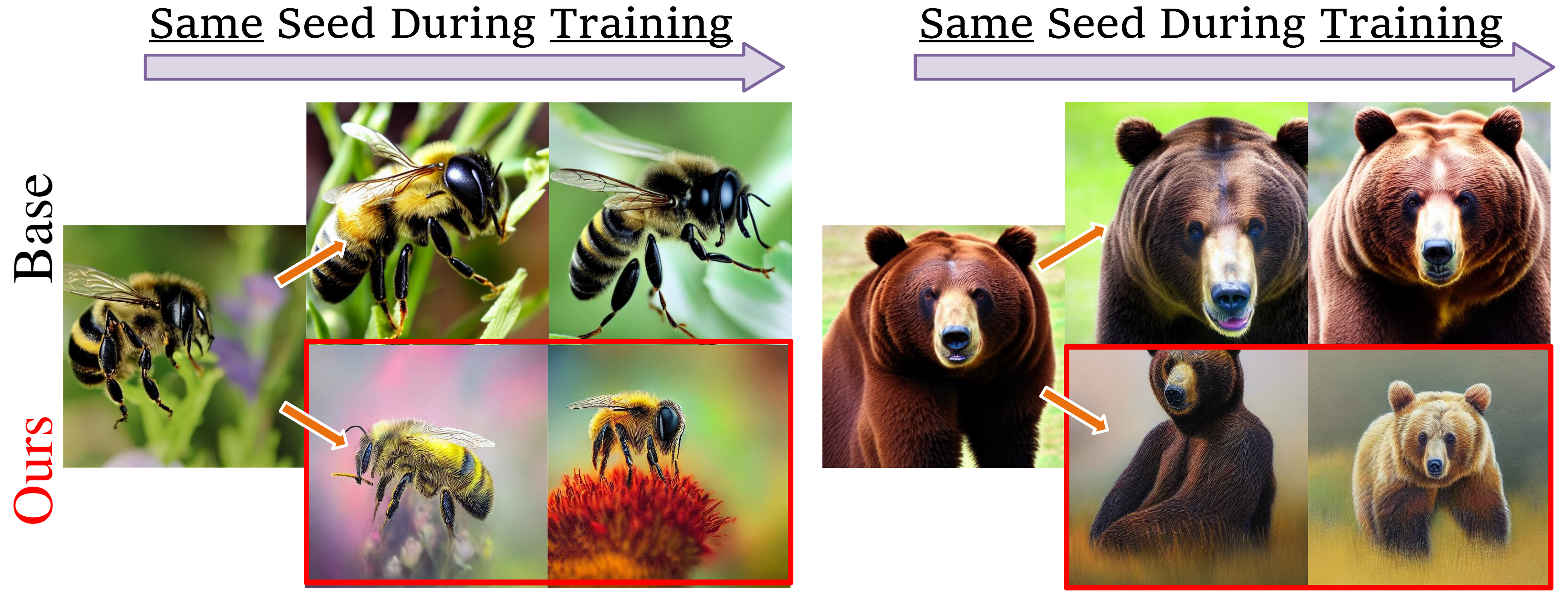}
    \captionsetup{hypcap=false}
    \captionof{figure}{$R_{int}$ effect on training.}
    \label{fig:epochD-re}
\end{minipage}
\hfill
\begin{minipage}{0.47\linewidth}
    \centering
    \captionsetup{hypcap=false}
    \captionof{table}{Fidelity \& Diversity.}
    \label{tab:tv}
    \begin{tabular}{lcccc}
    \toprule
    Method & FID$\downarrow$ & F1$\uparrow$ & LPIPS$\uparrow$ & TCE$\uparrow$ \\
    \midrule
    DDPO & 95.15 & 0.494 & 0.601 & 38.98 \\
    TDPO & 101.4 & 0.217 & 0.517 & 37.16 \\
    \rowcolor{blue!10}
    \textbf{Ours} & \textbf{50.19} & \textbf{0.857} & \textbf{0.651} & \textbf{40.07} \\
    \bottomrule
    \end{tabular}
\end{minipage}

\begin{table}[b]
\small
\centering
\caption{Detailed prompts used for generated images in Fig.~\ref{fig:div_all_past}.}
\label{tab:prompt_table}
\begin{tabular}{p{0.22\linewidth} p{0.70\linewidth}}
\toprule
\textbf{Image} & \textbf{Prompt} \\
\midrule
Row 1, Col 1 &
A girl astronaut exploring the cosmos, floating among planets and stars. \\
\midrule
Row 1, Col 2 &
A robot penguin wearing a top hat and playing a vintage trumpet under a rainbow. \\
\midrule
Row 2, Col 1 &
An artist captures the serene beauty of a tranquil lake surrounded by majestic mountains. Three swans glide gracefully across the water, mirroring the peaceful scene on his canvas. The vibrant colors of nature and the artist's focused dedication create a harmonious blend of art and reality in this picturesque setting.\\
\midrule
Row 2, Col 2 &
Under a bright moon, a brown-haired woman in a snowy forest; Splash art.\\
\midrule
Row 3, Col 1 &
A mystical deer with antlers that glow, guiding lost travellers through a snowy forest at night.
\\
\midrule
Row 3, Col 2 &
A night scene of a lavender field with a town and church in the background, reminiscent of Vincent van Gogh’s style.\\
\bottomrule
\end{tabular}
\end{table}

\subsection{Showcase Prompt Table}
Considering that the specific semantics of the prompts can substantially affect the assessment of the actual quality of generated images, it is necessary to assess the performance superiority of our method based on both images and their corresponding prompts.
Therefore, in Tab.~\ref{tab:prompt_table}, we provide a detailed list of the prompts that were not explicitly described in the main text.

\end{CJK*}

\end{document}